\pdfoutput=1

\documentclass[11pt]{article}

\usepackage{acl}

\usepackage[ruled,linesnumbered]{algorithm2e}
\usepackage{times}
\usepackage[]{mdframed}
\usepackage[]{tcolorbox}
\usepackage{latexsym}
\usepackage{amsmath}
\usepackage{physics}
\usepackage{bbding}
\usepackage{pifont}
\usepackage{wasysym}
\usepackage{amssymb}
\usepackage{bbm}
\usepackage{multirow}
\usepackage{graphicx}
\usepackage{makecell}
\usepackage{multicol,booktabs,tabularx}

\usepackage{amsmath, amssymb, amsthm}
\usepackage{mathtools}
\usepackage{booktabs}
\usepackage{hyperref}
\usepackage{enumitem}

\usepackage{xcolor}  
\usepackage{soul}  
\usepackage{array}  
\usepackage{adjustbox} 
 
\definecolor{myhlcolor}{HTML}{A0D6E5}  
\sethlcolor{myhlcolor}  

\newcolumntype{L}{>{\hspace{2pt}}l<{\hspace{2pt}}}  
\newcolumntype{C}{>{\hspace{2pt}}c<{\hspace{2pt}}}

\usepackage[T1]{fontenc}

\usepackage[utf8]{inputenc}

\usepackage{microtype}

\usepackage{inconsolata}

\usepackage{graphicx}

\newtheorem{theorem}{Theorem}

\theoremstyle{definition}
\newtheorem{assumption}{Assumption}

\theoremstyle{remark}
\newtheorem{remark}{Remark}

\newcommand{\sfreq}{\operatorname{sfreq}}
\newcommand{\E}{\mathbb{E}}

\title{\textit{Adam's Law:} Textual Frequency Law on Large Language Models}

\author{
    Hongyuan Adam Lu$^{\clubsuit}$\Thanks{\hspace{1mm}Equal Contribution.}, Z.L.$^{\clubsuit*}$, Victor Wei$^{\clubsuit}$, Zefan Zhang$^\clubsuit$, Zhao Hong$^\clubsuit$, Qiqi Xiang$^\clubsuit$
    \\\textbf{Bowen Cao$^\heartsuit$, Wai Lam$^\heartsuit$}\\
    $\clubsuit$FaceMind Corporation\\
    $\heartsuit$The Chinese University of Hong Kong\\
    hongyuanlu@outlook.com
}

\begin{document}
\maketitle
\begin{abstract}
While textual frequency has been validated as relevant to human cognition in reading speed, its relatedness to Large Language Models (LLMs) is seldom studied.  We propose a novel research direction in terms of textual data frequency, which is an understudied topic, to the best of our knowledge. Our framework is composed of three units. First, this paper proposes \textbf{T}extual \textbf{F}requency \textbf{L}aw (TFL), which indicates that frequent textual data should be preferred for LLMs for both prompting and fine-tuning. Since many LLMs are closed-source in their training data, we propose using online resources to estimate the sentence-level frequency. We then utilize an input paraphraser to paraphrase the input into a more frequent textual expression. Next, we propose \textbf{T}extual \textbf{F}requency \textbf{D}istillation (TFD) by querying LLMs to conduct story completion by further extending the sentences in the datasets, and the resulting corpora are used to adjust the initial estimation. Finally, we propose \textbf{C}urriculum \textbf{T}extual \textbf{F}requency \textbf{T}raining (CTFT) that fine-tunes LLMs in an increasing order of sentence-level frequency. Experiments are conducted on our curated dataset \textbf{T}extual \textbf{F}requency \textbf{P}aired \textbf{D}ataset (TFPD) on math reasoning, machine translation, commonsense reasoning and agentic tool calling. Results show the effectiveness of our framework.\footnote{\url{https://github.com/HongyuanLuke/frequencylaw}}
\end{abstract}
\par
\begin{figure}[ht!]
\begin{center}
\vspace{0mm}
\centerline{
\includegraphics[width=6cm]{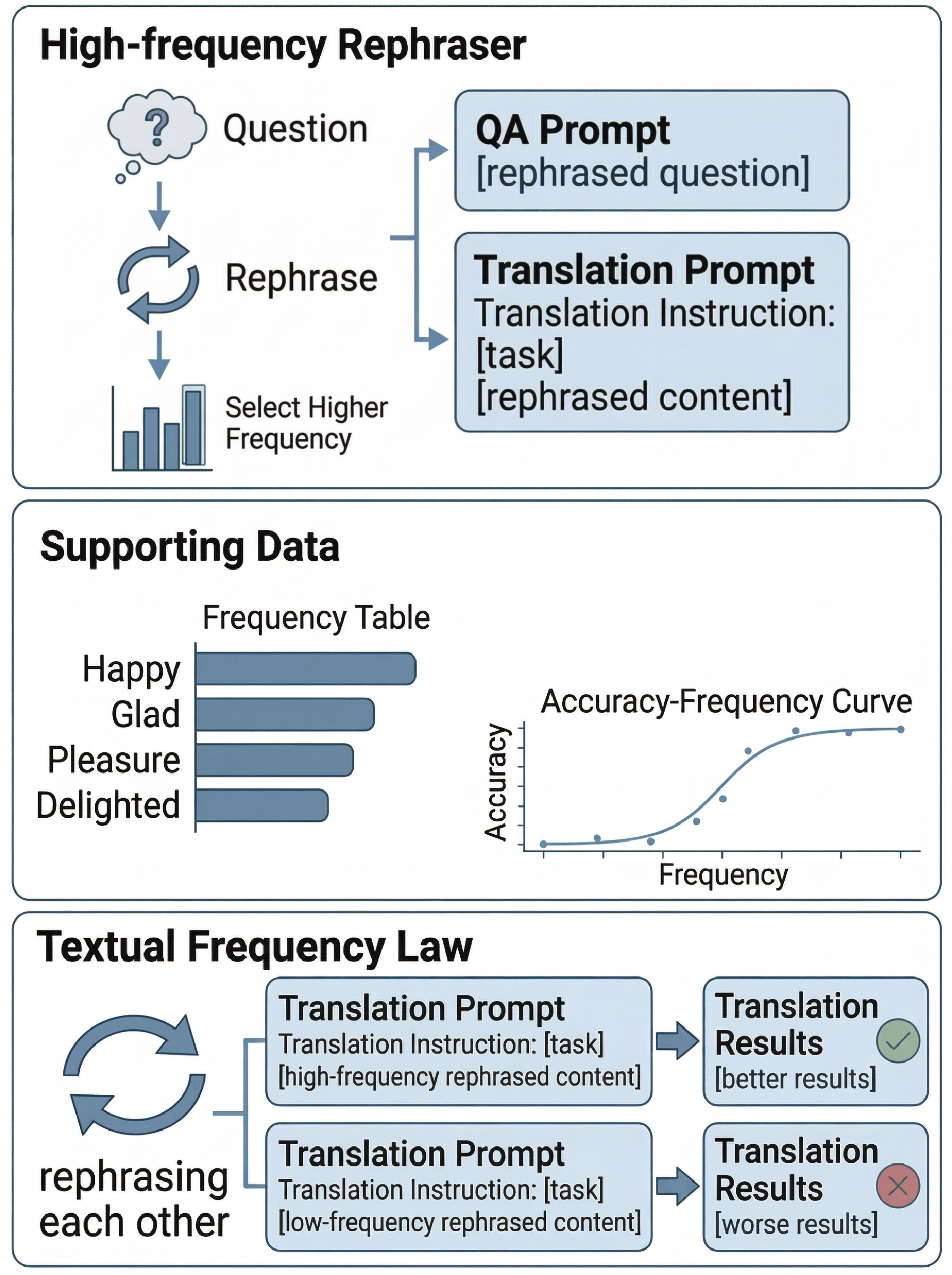}}
    \caption{\textbf{Top:} A simplified example of use case of Textual Frequency Law, where the prompt contents are rephrased and the prompt contents with higher frequency are selected. \textbf{Middle:} We achieve this by estimating sentence-level frequency with word-level frequency. \textbf{Bottom:} A toy example showing the effectiveness of our framework. Real case studies are available in the Appendix in Figure \ref{fig:casetable}. The paraphrasing can lead to semantic drift, which is the reason why human annotation is necessary in this process.}
    \label{fig:tfl}
\end{center}
\vspace{-5mm}
\end{figure}
\section{Introduction}
Large language models (LLMs) have demonstrated many exciting abilities and applications, such as chain-of-thought reasoning \citep{wang-etal-2023-towards,10.5555/3600270.3602070}, machine translation \citep{2023arXiv230506575L,zhu-etal-2024-multilingual}, and spatial reasoning \citep{hu2024chainofsymbol}, etc. More recently, increasing the length of the reasoning processes has become another popular research direction \citep{2025arXiv250112948D,2025arXiv250119393M}. Another important factor for training is the order of training, where it could be preferable from easy to hard in terms of the data difficulty \citep{lu-lam-2023-pcc}, or from short to long in terms of data length \citep{2025arXiv250215592Z}. Yet, what kind of data should be favourable during the training is an overlooked topic. Previous works have explored and concluded that the quality of the data is usually important \citep{iskander-etal-2024-quality,jin-wang-2024-select}. The amount of data is also important \citep{2024arXiv240721783G}.
\par
\citet{oh-etal-2024-frequency} found that larger models predict rare words better. In the era of LLMs, scaling factors usually mean that larger models can be better. This then may mean that predicting rare (less frequent) words could be a harder task than predicting frequent words. \citet{cao2024on} demonstrated that when prompting LLMs, different prompts with the same meaning could give very different results in terms of quality.
\par
This motivates us to investigate when the data are paraphrased to each other with the same meaning but different language expressions. The use of paraphrases has been explored in NLP research for many cases, such as mitigating data contamination \citep{zhu-etal-2024-clean}, evaluating generation tasks \citep{tang-etal-2024-metrics} and data augmentation (DA, \citet{abaskohi-etal-2023-lm}). As a DA method, paraphrases are useful for training LLMs \citep{lu-lam-2023-pcc}, so this means that we might want to include all the paraphrases in the training when it is affordable. However, training resources are usually limited, and we investigate whether the frequency matters when the meaning is kept, and the computational resources are limited for fine-tuning. Also, such investigation on paraphrased inputs into LLMs can be important, as \citet{cao2024on} has found that they usually give different performance, but there isn't a clear conclusion yet which factors are relevant to this phenomenon.
\par 
In contrast, this paper proposes novel \textbf{T}extual \textbf{F}requency \textbf{L}aw (TFL), which suggests that when the meanings are kept the same, data with higher sentence-level frequency should be preferred to the ones with low frequency, for both prompting and fine-tuning. The underlying motivation is that this paper postulates that higher-frequency data occurs more frequently than lower-frequency data in the pre-training stage, so they are easier to understand by LLMs. Based on such a law, this paper proposes to calculate the frequency estimation through online open-source data corpora, as many LLMs are closed-source and we usually do not have direct access to their training data. To further enhance the estimation, this paper proposes a novel method called \textbf{F}requency \textbf{T}extual \textbf{D}istillation. TFD conducts story completion with a text dataset on the target LLMs, and the completed story generation is used to enhance the original frequency estimation. Last, we propose \textbf{C}urriculum \textbf{T}extual \textbf{F}requency \textbf{T}raining (CTFT) that fine-tunes LLMs in increasing order of sentence-level frequency with the training data, which yields better results.
\par
 Our frequency training framework is composed of three units, and our contributions are three-fold:
\begin{itemize}
\setlength\itemsep{0em}
    \item We propose \textbf{T}extual \textbf{F}requency \textbf{L}aw, which suggests that high-frequency textual data should be preferred for LLMs when conducting prompting and fine-tuning, when the meaning of the data is kept the same, i.e., they are paraphrases.
    \item We propose a novel method called \textbf{T}extual \textbf{F}requency \textbf{D}istillation to further enhance the frequency estimation (collected from online resources) via conducting story completion to collect model generation from those LLMs that we do not have direct access to the training textual data.
    \item We propose a novel method called \textbf{C}urriculum \textbf{T}extual \textbf{F}requency \textbf{T}raining that fine-tunes LLMs in an increasing order of sentence-level frequency with the training data.
\end{itemize}
Figure \ref{fig:tfl} demonstrates a use case of our proposed framework, where prompts are rephrased to achieve higher accuracy.
\section{Prior Works}
\subsection{Textual Frequency}
Textual frequency is even related to human neural activation. \citet{PMID:33015218} explored the neural activation differences between low-frequency words and high-frequency words in reading tasks, finding that high-frequency words generally evoke stronger neural responses. \citet{a379cd8748534851afa9d2ecdd090eb4} explored the neural activation differences between low-frequency words and high-frequency words in reading tasks, finding that high-frequency words generally evoke stronger neural responses. \citet{Mohan03092019} also mentioned the impact of word frequency on semantic retrieval.\par
Then, textual frequency plays an important role in artificial intelligence.  \citet{heylen-etal-2008-modelling} investigated the semantic similarity between words of different frequencies and found that high-frequency target words have higher semantic similarity with their nearest neighbour words. \citet{oh-etal-2024-frequency} found that larger models predict rare words better. This then may mean that predicting rare (less frequent) words could be a harder task than predicting frequent words, as larger models can usually be stronger. More recnetly, \citet{lu-etal-2025-slow} discovered that low-frequency words should be explained to LLMs for better machine translation.
\subsection{Paraphrasing on Language Models}
Paraphrasing is an important language task that is tackled well by language models \citep{witteveen-andrews-2019-paraphrasing,goyal-durrett-2020-neural}. Yet, paraphrasing can still be a useful method to improve language models from various aspects. \citet{tang-etal-2024-metrics} uses paraphrases to generate diverse references, which helps in evaluating language models. \citet{zhu-etal-2024-clean} uses paraphrasing as a method to cleanly evaluate the possibly contaminated large language models. \citet{gao-etal-2020-paraphrase} uses paraphrases as data augmentation to improve goal-oriented dialogue systems. More recently, \citet{2023drllama} also uses generative data augmentation, which reflects the usefulness of paraphrasing in enhancing model performance. One setting in this paper compares the performance of LLMs on paraphrases with the same meaning but different frequencies. Yet, there are some overlooks in the previous setting. It is crucial as the computational budgets for training and prompting \citep{cao2024on} are usually limited. It raises questions: which paraphrases are more useful? Should we use all paraphrases?
\section{Proposed Approach}
\subsection{Task Formulation}
The large language model (LLM) can be regarded as a Seq2Seq neural network \citep{seq2seq} to follow the instructions to conduct various tasks with additional inputs by maximising the following likelihood:
\begin{equation}
\label{eq1}
    P\,(\mathbf{y}\mid \mathbf{i}, \mathbf{x})=\prod_{j=1}^{\mathbb{T}}P\,(y_j\mid y_1,..., y_{j-1}, \mathbf{i}, \mathbf{x}),
\end{equation}
where $\mathbb{T}$ represents the length of the generated output and $y_j$ represents the word at the position $j$ that has been inferenced. $\mathbf{i}$ represents the instruction to guide the LLMs to process the inputs. $\mathbf{x}$ represents the source sentences. Note that the actual format could be case by case for different tasks. For example, we conduct experiments on math reasoning and machine translation (MT). For math reasoning, there is no $\mathbf{x}$, as the instruction itself already contains the actual question. In contrast, for MT, $\mathbf{i}$ is usually the instruction to ask LLMs to translate the actual sentence $\mathbf{x}$ to the target language while maintaining the actual meaning. For convenience, we denote $\mathbf{x}$ as the concatenation of the instruction and the actual input in the rest of this section.
\subsection{Textual Frequency Law}
This paper proposes \textbf{T}extual \textbf{F}requency \textbf{L}aw (TFL) to select the paraphrases with the highest sentence-level textual frequency for both prompting and fine-tuning on LLMs:
\begin{equation}
\label{eq2}
    \mathrm{argmax}_{\mathbf{x}\in \mathcal{P}}(\mathrm{sfreq}(\mathbf{x},\mathcal{D})),
\end{equation}
where $\mathbf{x}$ corresponds to the textual input as in Equation \ref{eq1}. $\mathcal{P}$ represents a set of paraphrases that contain the same meaning.  $\mathrm{sfreq}$ represents a function that evaluates a sentence-level textual frequency.
\par 
Such a frequency function $\mathrm{freq}$ can be obtained and calculated given a corpus $\mathcal{D}$. In this paper, we suggest that such sentence-level frequency can be estimated by using a position-unaware multiplication of word-level frequency: 
\begin{equation}
\label{eq3}
     \mathrm{sfreq}(\mathbf{x},\mathcal{D})=\sqrt[\mathbb{K}]{\prod_{k=1}^{\mathbb{K}}\mathrm{wfreq}( \mathbf{x}_k,\mathcal{D})}
\end{equation}
\par 
Here, $\mathrm{wfreq}$ is the word-level frequency calculator that we use to estimate the sentence-level frequency. In this paper, we suggest that there is no need to obtain the actual training data of LLMs, and an arbitrary text corpus can be adapted to obtain the frequency. We obtain the sentence-level frequency with the inverse normalised multiplication of the word-level frequency.
\paragraph{Prompting} When prompting with $\mathbf{x}$, higher $\mathbf{x}$ should be used to generate outputs from LLMs.
\paragraph{Fine-tuning} For fine-tuning, $\mathbf{x}$ with a higher frequency should be used together with the desired ground truth output $\mathbf{y}$ to fine-tune the LLMs.
\subsection{Textual Frequency Distillation}
Note that the frequency we obtained in the previous section is an estimation from online resources but not the actual data, as many LLMs are closed-source in their training data. This paper proposes \textbf{T}extual \textbf{F}requency \textbf{D}istillation (TFD) to further enhance this estimation. TFD asks LLMs to generate data by the following instructions:
\begin{tcolorbox}
Please conduct story completion on the following data: <textual data>
\end{tcolorbox}
, where <textual data> represents the data we have in our training set. We denote this distilled dataset as $\mathcal{D'}$. We obtain a new frequency estimation:
\begin{equation}
    \mathcal{F}_2=\mathrm{sfreq}(\mathbf{x},\mathcal{D'}),
\end{equation}
and we denote the original frequency estimation as in Equation \ref{eq3} as $\mathcal{F}_1$. Note that this step in obtaining $\mathcal{F}_2$ is relatively computationally expensive, as the data are distilled from the actual LLMs. This is, therefore, optional, and our proposed method is still effective even with $\mathcal{F}_1$ only. We then calculate the final frequency $\mathcal{F}$ as:
\begin{equation}
    \mathcal{F}(x)=\alpha\mathcal{F}_1(x)+(1+\zeta\mathbbm{1}(\mathcal{F}_1(x)=0)) \beta\mathcal{F}_2(x),
\end{equation}
where $\alpha$, $\beta$, and $\zeta$ are hyper-parameters. In the formula above, $\zeta$ is a strengthening factor to increase the effect of the distilled frequency when the words yield an ignorable frequency in the original estimation from $\mathcal{F}_1$. The calculated frequency $\mathcal{F}(x)$ is then used to choose the highest frequency instead of the original estimated frequency as in Equation \ref{eq2} and Equation \ref{eq3}.
\subsection{Curriculum Textual Frequency Training}
Motivated by the fact that low-frequency expressions can be more diverse \citep{lu-lam-2023-pcc}, which should be trained first \citep{10.5555/2969033.2969059}, we propose \textbf{C}urriculum \textbf{T}extual \textbf{F}requency \textbf{T}raining (CTFT), a method that further uses the frequency information beyond paraphrase selection during prompting. For a training set $\mathcal{T}$ that is composed of $\mathbb{N}$ instances, we propose to arrange the data in the following training order for each epoch: 
\begin{equation}
   \mathrm{sort}_{x_n\in \mathcal{T}}(\mathcal{F}(x_n)),
\end{equation}
where $\mathrm{sort}$ is a sorting function that arranges the order from lower frequency sentence-level to higher sentence-level frequency for each training instance $x_n$ in $\mathcal{T}$ with a total number of $\mathbb{N}$ instances. Note that the training instances are usual machine learning datasets here and do not have to be paraphrases of each other. We experiment with CTFT on the fine-tuning scenarios on LLMs. CTFT extends TFL and TFD to a better fine-tuning scenario.
\begin{table}
\small
\centering
\setlength\tabcolsep{6pt}
\setlength\aboverulesep{0pt}\setlength\belowrulesep{0pt}
\setcellgapes{3pt}\makegapedcells
\begin{tabular}{l|cccc}
\hline
\textbf{Tasks}& \textbf{MR} & \textbf{MT} & \textbf{CR} & \textbf{TC} \\
\hline
\multicolumn{5}{c}{\textit{high-frequency}}\\
\hline
\#. Sentences & 738 & 526 & 575 & 114\\
Avg Length & 25.86 & 21.70 & 23.66 & 41.96\\
Max Length & 71 & 60 & 64 & 73\\
Min Length & 11 & 7 & 9 & 22\\
\hline
\multicolumn{5}{c}{\textit{low-frequency}}\\
\hline
\#. Sentences & 738 & 526 & 575 & 114\\
Avg Length & 25.28 & 24.78 & 22.43 & 47.82\\
Max Length & 59 & 62 & 57 & 86\\
Min Length & 10 & 9 & 8 & 25\\
\hline
\end{tabular}
\caption{\label{tfpd_stat}
Statistics of  \textbf{T}extual \textbf{F}requency \textbf{P}aired \textbf{D}ataset (TFPD). We denote Math Reasoning as \textbf{MR}, Machine Translation as \textbf{MT}, Commonsense Reasoning as \textbf{CR}, and Tool Calling as \textbf{TC}. We denote the total instances in the dataset as \#. Sentences, and we report the length in English words. The ground-truth answer from the original datasets is directly adopted without modification. Each sentence in the high-frequency partition is paired with one sentence in the low-frequency partition.
}
\end{table}
\subsection{Textual Frequency Paired Dataset}
There is almost no such dataset for our the goal. Therefore, we collect our own dataset, \textbf{T}extual \textbf{F}requency \textbf{P}aired \textbf{D}ataset (TFPD), for this paper. Based on the original datasets GSM8K \citep{2021arXiv211014168C}, FLORES-200 \citep{nllb2022}, CommonsenseQA \citep{talmor-etal-2019-commonsenseqa}, and ToolBench \citep{guo-etal-2024-stabletoolbench}, we use GPT-4o-mini to rephrase the English sentences in GSM8K and FLORES-200. The rephrased sentences are sent to three human annotators. For human annotation, we hired three experienced annotators who have degrees relevant to English Linguistics, paid with reasonable payment, to conduct a human validation on the generated sentences. We discard the instances if the three sentences do not have the same meaning by any human annotator. We use the following instructions to rephrase the datasets automatically: 
\begin{tcolorbox}
  My goal is to transform the original sentence into both more common and less common expressions.\\
Note: Do not omit any words such as verbs, adjectives, nouns, or adverbs.\\
You must generate two types of sentences:\\
(1) ten sentences using less common, more complex words.
\\(2) ten sentences using more common, simpler words. 
\\Return all 20 sentences directly, separated by |||| and do not use numbering.
\\Original sentence: {sentence}
\end{tcolorbox}

The above instructions on GPT-4o-mini then generate 20 paraphrases. We select the two sentences with the lowest and highest frequency, respectively, as in Equation \ref{eq1}. Those two sentences are sent along with the original input sentence for succeeding human annotation to check whether all three sentences have the same meaning:
\begin{itemize}
\setlength\itemsep{0em}
    \item The same meaning: I believe these three sentences have the same meaning.
    \item Maybe the same meaning: Maybe these three sentences have the same meaning, but I might be wrong because of some reasons, for example, some rephrased words might not be appropriate for the context.
    \item Not the same meaning: I am sure that these three sentences do not have the same meaning.
\end{itemize}
We only preserve those samples that all our annotators believe are authentically the same meaning. Finally, we obtain 738 pairs out of 1,319 original GSM8K test instances, and we obtain 526 pairs out of 1,012 original FLORES-200 dev-test instances. Note that for the fine-tuning experiments, we use the constructed TFPD dataset as the training data to check the impact of textual frequency on fine-tuning, and we randomly select 500 samples from the FLORES-200 dev set for evaluation. This process is approved by FaceMind ethics review aboard.
\par
Table \ref{tfpd_stat} presents the length statistics of the samples. For space reasons, we present frequency statistics in Appendix in Table \ref{freqstat}.

\section{Experimental Setup}
\subsection{Evaluation Metrics}
For the task of math reasoning, accuracy is adopted as the evaluation metric \citep{2021arXiv211014168C}. For the task of machine translation, we report the chrF \citep{popovic-2015-chrf} and the BLEU \citep{papineni-etal-2002-bleu} evaluations provided by the sacreBLEU repository.\footnote{https://github.com/mjpost/sacrebleu} We also adopt neural-based evaluation using COMET scores versioned wmt22-comet-da\footnote{\url{https://github.com/Unbabel/COMET}} \citep{rei-etal-2020-comet}. Note that there are 37 supported languages by COMET, out of 100 languages in this study. We release the full list as in Appendix. We use chrF signature of the parameters with nworde=6, ncorder=6, beta=2. We use BLEU signature of ngram=4, weights=(0.25, 0.25, 0.25, 0.25), smoothing=method1, smoothingfunction=SmoothingFunction().method1, tokenizer=nltkwordtokenize.
\subsection{Baselines}
We conduct experiments on both closed-source and open-source LLMs for better reproducibility on GPT-4o-mini and DeepSeek-V3 \citep{2024arXiv241219437D}. DeepSeek-V3 is an MoE model with 671B model parameters. Both of them are widely used LLMs with robust multilingual translation capabilities. We also use doubao-1.5-pro-32k and qwen2.5-7b-instruct as baselines for our translation experiments. For the fine-tuning experiments validating the effectiveness of high-frequency data and the usefulness of CTFT, all experiments are conducted on qwen2.5-7b-instruct, which is an open-source LLM. We use Llama-3.3-70B-Instruct in our MR experiments \citep{2024arXiv240721783G}. We use LoRA fine-tuning \citep{hu2022lora} throughout the paper. The hyperparameters for fine-tuning are presented in Appendix for better reproducibility.
\par 
We also compare our method for the reverse setting (fine-tuning from high-frequenty to low-frequency) as well as traditional curriculum learning (from easy-to-hard, \citep{lu-lam-2023-pcc}). For the easy-to-hard baseline, we use Max Dependency Tree Depth as the difficulty function.\footnote{We use nlp = spacy.load("en\_core\_web\_sm") to calculate it.}
\subsection{Off-the-shelf Frequency Estimation}
For off-the-shelf frequency estimation, we adopt off-the-shelf resources for estimation \footnote{https://github.com/rspeer/wordfreq} using Zipf frequency \citep{robyn_speer_2022_7199437}. Since this project is further built on many resources such as ParaCrawl \citep{banon-etal-2020-paracrawl}, we refer the readers to their projects for more references.

\subsection{Language Selection}
We randomly select 100 languages from the FLORES-200 datasets for our prompting experiments, and we release their language class according to \citet{joshi-etal-2020-state} in Table \ref{lc} in Appendix. More than half of the languages are relatively low-resource according to the class definition (class 0 or class 1). For the experiments on CTFT, we use Kabuverdianu (kea\_Latn), Kikuyu (kik\_Latn), Pangasinan (pag\_Latn), and Standard Latvian (lvs\_Latn).

\begin{figure}[thb!]
\begin{center}
\vspace{0mm}
\centerline{
\includegraphics[width=7.5cm]{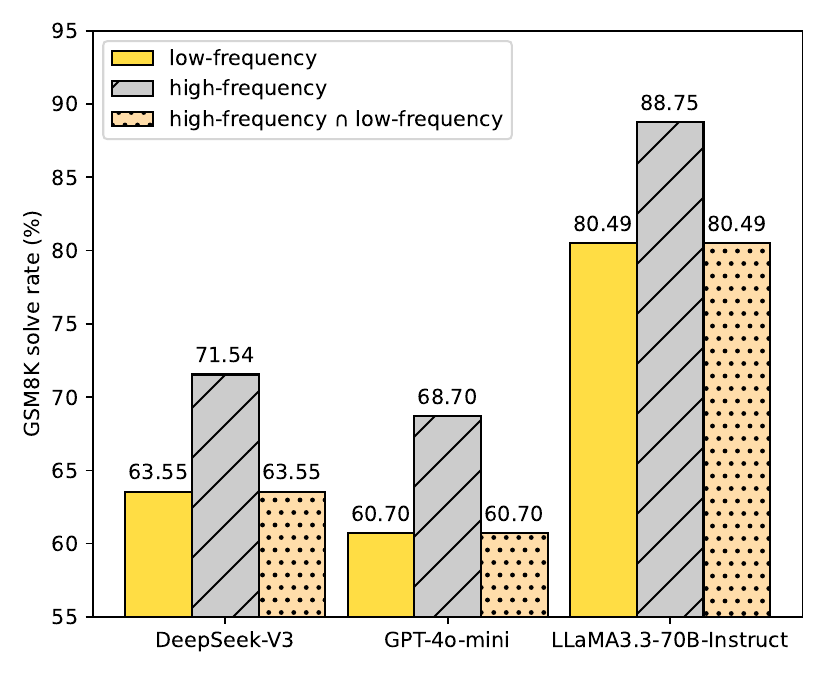}}
    \caption{The overall accuracy of TFPD on math reasoning for our proposed framework. It is obvious that the high-frequency partition in TFPD has a higher accuracy than the low-frequency partition. High-frequency $\cap$ low-frequency denotes a model that is correct in both low-frequency and high-frequency partitions.}
    \label{fig:gsm8k}
\end{center}
\vspace{-5mm}
\end{figure}

\begin{figure*}[t!]
\begin{center}
\vspace{0mm}
\centerline{
\includegraphics[width=15cm]{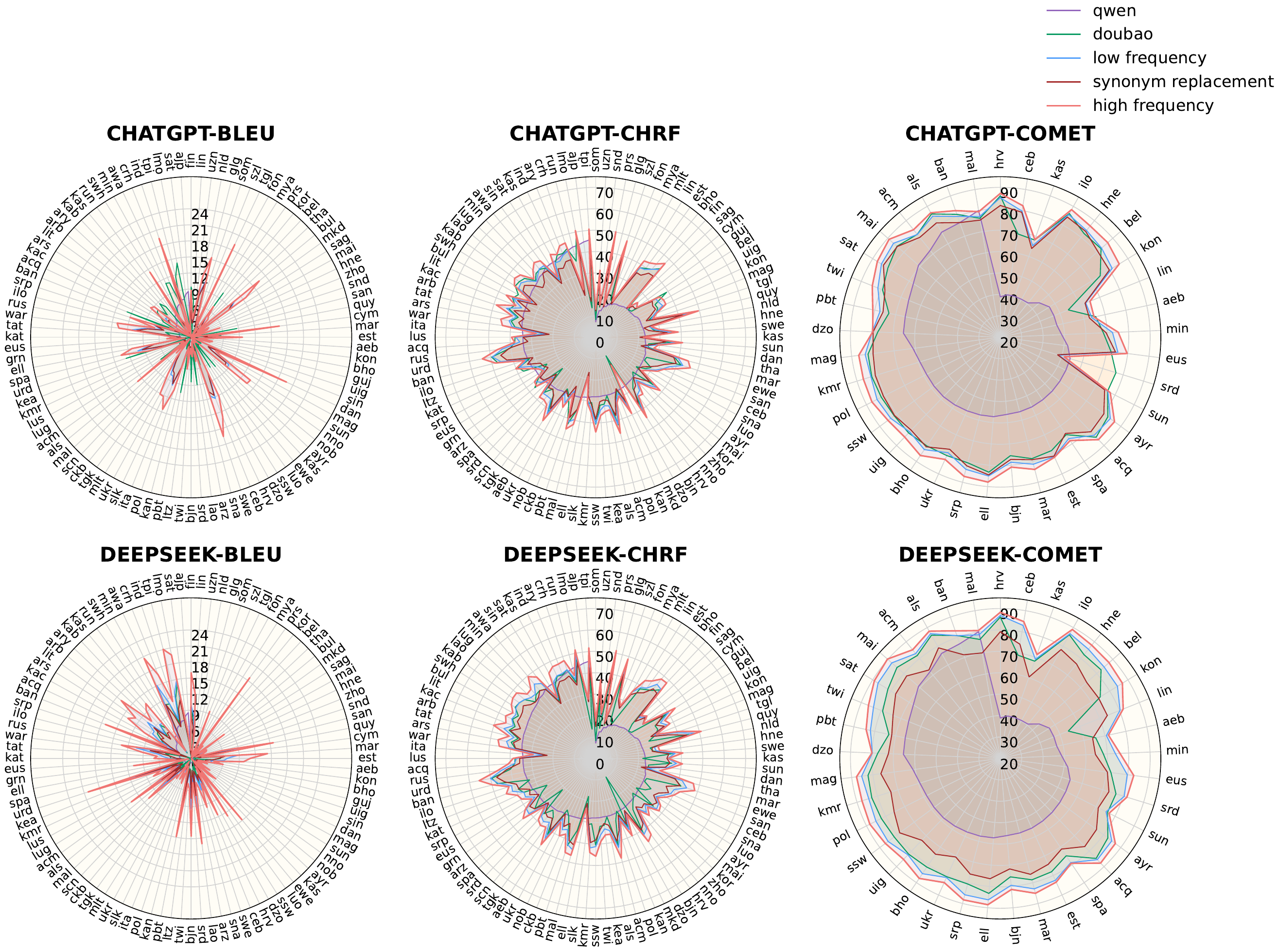}}
    \caption{The figure demonstrating the performance of our proposed framework in using high-frequency partition for translation. Results are reports on translating from English into other languages. Detailed numbers are reported in Appendix in Table \ref{tab:language_metrics1}, \ref{tab:language_metrics2}, \ref{tab:language_metrics3}, \ref{tab:language_metrics4}, \ref{tab:language_metrics5}, and \ref{tab:language_metrics6}. Synonym is a baseline that replaces words randomly with their higher-frequency rephrases using NLTK: \url{https://www.nltk.org/}.}
    \label{fig:trans}
\end{center}
\vspace{-5mm}
\end{figure*}

\subsection{Translation Prompt}
We release our 1-shot prompt for translation for better reproducibility: 

\begin{tcolorbox}
Translate the following sentence from English to \{lang\}.\\
       For example:\\
       sentence: Television reports show white smoke coming from the plant.\\  
       translation: \{trans\}\\ 
       Now, please translate the following sentence to \{lang\}.\\
       sentence: \{question\}\\
       Your output format must be like this:\\ 
       The translation result is:   
\end{tcolorbox}

\section{Results}

\begin{table*}[t!]
\centering
    \setlength\tabcolsep{10pt}
    \setlength\extrarowheight{0pt}
\begin{tabular}{l|ccc}
\hline
 
\textbf{Models} & \textbf{GPT-4o-mini} & \textbf{DeepSeek-V3} & \textbf{Llama-3.3-70B-Instruct}\\
\hline
Low-frequency partition & 0.6747 & 0.7043 &  0.7530\\
High-frequency partition & \textbf{0.6974} & \textbf{0.7235} & \textbf{0.7704}\\

\hline
\end{tabular}
\caption{\label{crresults}
Results reported in accuracy on the partition of CR. We see that the high-frequency partition gives better results on all baseline models.
}
\end{table*}
\begin{table*}[h!]
\centering
    \setlength\tabcolsep{3.5pt}
    \setlength\extrarowheight{0pt}
\begin{tabular}{l|cccc|cccc}
\hline
 
\textbf{Models} & \textbf{\# improved} & \textbf{> 1 pt} & \textbf{> 3 pts} & \textbf{> 5 pts} & \textbf{\# degraded} & \textbf{> 1 pt} & \textbf{> 3 pts} & \textbf{> 5 pts}\\
\hline
\multicolumn{9}{c}{\textit{BLEU}}\\
\hline
DeepSeek-V3   & 99/100    & 63/99     & 31/99     & 12/99      & 1/100    & 0/1      & 0/1  & 0/1 \\  
GPT-4o-mini & 95/100   & 49/95    & 27/95     & 5/95     & 5/100     & 0/5      & 0/5   & 0/5 \\   
\hline
\multicolumn{9}{c}{\textit{chrF}}\\
\hline
DeepSeek-V3   & 100/100    & 86/100     & 40/100     & 7/100      & 0/100    & 0/0      & 0/0  & 0/0 \\  
GPT-4o-mini & 91/100   & 75/91    & 34/91     & 2/91     & 9/100     & 0/9      & 0/9   & 0/9 \\ 
\hline
\multicolumn{9}{c}{\textit{COMET}}\\
\hline
DeepSeek-V3   & 37/37    & 33/37     & 4/37     & 0/37      & 0/37    & 0/0     & 0/0  & 0/0 \\  
GPT-4o-mini &  36/37    & 35/36     & 11/36     & 0/36      & 1/37    & 0/1     & 0/1  & 0/1 \\  
\hline
\end{tabular}
\caption{\label{summmt}
Statistics of the changes on prompting experiments in BLEU, chrF, and COMET scores with the high-frequency partition compared to the low-frequency partition on our established TFDP dataset. We evaluate translation from English into other languages. Most translations have been clearly improved. When there is any degradation, the degradation is less than 1 point. We denote `point' as `pt' and `points' as `pts'.
}
\end{table*}

\begin{table*}[ht!]
\centering
    \setlength\tabcolsep{10pt}
    \setlength\extrarowheight{0pt}
\begin{tabular}{l|cccc}
\hline
 
\textbf{Models} & \textbf{kea\_Latn} & \textbf{kik\_Latn} & \textbf{pag\_Latn}&\textbf{lvs\_Latn}\\
\hline
\multicolumn{5}{c}{\textit{BLEU}}\\
\hline
Original Model & 0.9346 & 1.0342 & 1.2296 & 2.2646\\
Fine-tuned Model & 4.6772 & 1.2811 & 4.5129 & 4.1954\\
Easy-to-hard Baseline & 5.1674	&1.3185	&4.4955	&3.5366\\
High-to-low Baseline &5.1179	&1.5298	&4.5365	&3.7840 \\
FT on LF w/o CTFT & 4.3899 & 1.4223 & 3.9073 & 3.2221\\
FT on 1/2 LF 1/2 HF w/o CTFT &4.7928&1.4783&4.4291&3.4787\\
FT on HF w/o CTFT & 5.2466&1.2432&3.7781&			3.9156\\
FT on HF w/ CTFT	&\textbf{5.3992}&\textbf{1.6570}&\textbf{4.9102}&\textbf{4.6027}\\ 
\hline
\multicolumn{5}{c}{\textit{chrF}}\\
\hline
Original Model&26.9844&20.6636&29.4351&33.2322\\
Fine-tuned Model&39.3714&25.6175&34.4672&34.0584\\
FT on LF w/o CTFT&39.4022&26.2465&33.9848&33.5538\\	
Easy-to-hard Baseline & 40.6414	&26.4981	&35.5396	&35.3337\\
High-to-low Baseline &41.0234	&26.5316	&35.8125	&36.1577 \\
FT on 1/2 LF 1/2 HF w/o CTFT &40.7831&26.8192&35.3375&34.2120\\
FT on HF  w/o CTFT&40.6515&26.4975&33.4990&35.0732\\
FT on HF w/ CTFT&\textbf{41.6206}&\textbf{27.7719}&\textbf{36.5285}&\textbf{37.0171}\\
\hline
\end{tabular}
\caption{\label{finetuning}
Results of fine-tuning experiments on translation from English into other languages, tested on the original FLORES-200 benchmark. Fine-tuned Model is tuned on the original FLORES-200 dataset. FT denotes fine-tuning, LF denotes low-frequency, HF denotes high-frequency, and CTFT denotes Curriculum Textual Frequency Training. 1/2 LF 1/2 HF denotes a training set with half samples sampled from the low-frequency partition and half samples sampled from the high-frequency partition. COMET is not reported due to unsupported languages.
}
\end{table*}
\subsection{Prompting on Math Reasoning}
Figure \ref{fig:gsm8k} presents the overall accuracy of TFPD on the task of math reasoning with prompting experiments. Our proposed framework is effective on all models that we experimented on. On DeepSeek-V3, the accuracy goes from 63.55\% to 71.54\%. On GPT-4o-mini, the accuracy goes from 60.70\% to 68.70\%. On LlaMA3.3-70B-Instruct, it goes from 80.49\% to 88.75\%. We also conduct deeper analyses. Specifically, we calculate the intersection of low-frequency and high-frequency partitions. We found that when a sample pair has a correct model generation on its low-frequency partition, its high-frequency version is still correct. In other words, using our proposed framework only improved those samples which were originally answered incorrectly by the models on the low-frequency partition. For those ones which were originally answered correctly by the models on the low frequency partition, their performance is maintained with the high frequency partition.
\par
For space reasons, Table \ref{scaling} in the Appendix represents that our method is consistently useful and high-frequency data brings improvements on different sizes of qwen-2.5 models across 0.5b to 72b on the task of MR.
\par
Table \ref{chain-of-thought-evaluation} indicates that the chain-of-thought process is improved, which can be the reason why the math reasoning capabilities are improved.

\subsection{Prompting on Neural Machine Translation}
Figure \ref{fig:trans} demonstrates the results on Neural Machine Translation (NMT) on our TFPD dataset. The orange line indicates the model using the high-frequency partition in our TFPD dataset on ChatGPT or DeepSeek models. The results follow our proposed TFL, which suggests that high-frequency rephrases should be preferred as inputs into LLMs. Specifically, for all six results on all metrics we report and all baselines we conduct, high-frequency partition gives the best results in overall. We also found that ChatGPT and DeepSeek models are close in their translation results on the language pairs we conducted experiments on, as their Figure seems to be relatively similar to each other. This is reasonable, as both of them are strong LLMs. We also report results on 37 languages supported by the COMET model in use. The results also suggest the effectiveness of our proposed law. 
\par
Table \ref{summmt} summarises the improvements on NMT. We can see that when compared to our best baseline using the low-frequency partition, translation on most of the language pairs is improved. For example, 99 out of 100 language pairs are improved for BLEU on DeepSeek-V3. 63 of them are improved by more than 1 point. 31 of them are improved by more than 3 points, and 12 of them are improved by more than 5 points. The observations are consistent across all metrics, namely, BLEU, chrF, and COMET scores we use, across both DeepSeek-V3 and GPT-4o-mini, which suggests the effectiveness of our proposed law. When there is any performance degradation, they are all less than 1 point across the metrics and the models, which enhances our claim and the usefulness of our law.

\subsection{Prompting on Commonsense Reasoning} Table \ref{crresults} reports addtional results on the commonse reasoning partition CR. It clearly shows that the high-frequency part surpasses the low-frequency part. This validates the effectiveness of our method.
\subsection{Fine-tuning on Neural Machine Translation}
Table \ref{finetuning} presents our results for fine-tuning on NMT. There are three takeaways from this Table. 
\paragraph{High-frequency partition is even better than the ground-truth data} For the baseline of \textit{Fine-tuned Model}, \textit{FT on HF w/o TFD w/o CTFT} is even better across the languages and the metrics. The former one uses the original FLORES-200 dataset for fine-tuning, and the latter uses our TFPD dataset for fine-tuning without any TFD or CTFT. The improvements are obvious, for example, it improves from 4.6772 (+0\%) in BLEU to 5.2466 (+12.17\%) in BLEU on kea\_Latn.
\par 
\paragraph{High-frequency partition is better than the low-frequency partition} 
By looking at the baselines \textit{FT on LF w/o CTFT} and \textit{FT on HF w/o CTFT}. It is first clear that the latter one, using the high-frequency partition, is better than the former one, using a low-frequency partition. Interestingly, replacing half of the low-frequency partition randomly using the high-frequency partition can still obviously improve the results. Specifically, the improvement can be from 3.9073 (+0\%) to 4.4291 (+13.35\%) in BLEU on pag\_Latn.
\par
\paragraph{CTFT is useful for fine-tuning on translation} By looking at the baseline \textit{FT on HF w/o CTFT} and \textit{FT on HF w/ CTFT}, the latter one trains the model using CTFT, from the order of low-to-high in terms of the textual frequency. This yields 8/8 of the best metrics we got in all the experiments. Specifically, the improvement can be from 3.7781 (+0\%) to 4.9102 (+29.96\%) in BLEU on pag\_Latn.
\begin{table*}[t!]
\small
\centering
    \setlength\tabcolsep{5pt}
    \setlength\extrarowheight{0pt}
\begin{tabular}{l|ccccc}
\hline
 
\textbf{Metric} & \textbf{High-Frequency} & \textbf{Low-Frequency} & \textbf{$\Delta$(HF-LF)}
& \textbf{Pearson Corr.} & \textbf{Spearman Corr.}\\
\hline
\multicolumn{6}{c}{\textit{Math Reasoning}}\\
\hline
Max Dependency Tree Depth & 5.02&	5.72&	-0.70& -0.0447& -0.0285\\
Mean Dependency Distance & 2.12	&2.22	&-0.10 &-0.0086 & 0.0094 \\
Flesch-Kincaid Grade Level & 4.36& 6.35	& -1.99&-0.0799 & -0.0545 
\\
\hline
\multicolumn{6}{c}{\textit{Machine Translation}}\\
\hline
Max Dependency Tree Depth & 5.52 &7.51	&-1.99  &-0.2713 &-0.2822\\
Mean Dependency Distance & 2.31&	2.47&	-0.16 &-0.1137 &-0.1257\\
Flesch-Kincaid Grade Level & 8.97	&9.08	&-0.11 &-0.1673 & -0.1528 
\\
\hline
\end{tabular}
\caption{\label{complexity}
Textual complexity metrics and their correlation with frequency. Corr. denotes correlation. We use nlp = spacy.load("en\_core\_web\_sm") for calculation.
}
\end{table*}

\begin{table*}[t!]
\tiny
\centering
    \setlength\tabcolsep{13pt}
    \setlength\extrarowheight{0pt}
\begin{tabular}{l|ccccccc}
\hline
\textbf{Bin Range} & \textbf{N} & \textbf{BLEU(HF)} & \textbf{BLEU(LF)} & \textbf{$\Delta$BLEU(HF-LF)} & \textbf{chrF(HF)} & \textbf{chrF(LF)} & \textbf{$\Delta$chrF(HF-LF)} \\
\hline
Strict Depth Match & 144 & 20.82 & 16.04 & +4.78 & 48.73 & 43.86 & +4.87 \\
$[0\%, 5\%)$ & 144 & 20.82 & 16.04 & +4.78 & 48.73 & 43.86 & +4.87 \\
$[5\%, 10\%)$ & 6 & 22.45 & 14.79 & +7.65 & 49.76 & 49.19 & +0.57 \\
$[10\%, 15\%)$ & 71 & 19.12 & 15.38 & +3.74 & 46.19 & 44.71 & +1.47 \\
$[15\%, 20\%)$ & 65 & 20.93 & 14.77 & +6.16 & 48.91 & 43.46 & +5.45 \\
$[20\%, 25\%)$ & 53 & 24.08 & 18.52 & +5.56 & 50.87 & 44.27 & +6.60 \\
$[25\%, 30\%)$ & 65 & 19.75 & 12.54 & +7.21 & 47.53 & 42.51 & +5.01 \\
$[30\%, 35\%)$ & 41 & 19.90 & 12.61 & +7.29 & 47.78 & 43.72 & +4.05 \\
$[35\%, 40\%)$ & 17 & 19.03 & 14.13 & +4.90 & 44.22 & 42.62 & +1.60 \\
$[40\%, 45\%)$ & 28 & 16.53 & 9.76 & +6.77 & 46.47 & 40.92 & +5.55 \\
$[50\%, 55\%)$ & 21 & 13.89 & 16.20 & -2.31 & 41.65 & 46.86 & -5.21 \\
$[55\%, 60\%)$ & 9 & 10.93 & 3.33 & +7.60 & 45.62 & 38.92 & +6.70 \\
$[60\%, 65\%)$ & 4 & 17.13 & 12.18 & +4.95 & 43.30 & 44.46 & -1.16 \\
$[65\%, 70\%)$ & 2 & 15.54 & 4.36 & +11.17 & 37.17 & 39.72 & -2.56 \\
\hline
\end{tabular}
\caption{\label{complexity2}
Separated bins with those high-frequency and low-frequency samples with restricted tree depth difference.
}
\end{table*}
\subsection{Analysis on TFD}

\begin{figure}[ht!]
\begin{center}
\vspace{0mm}
\centerline{
\includegraphics[width=7.5cm]{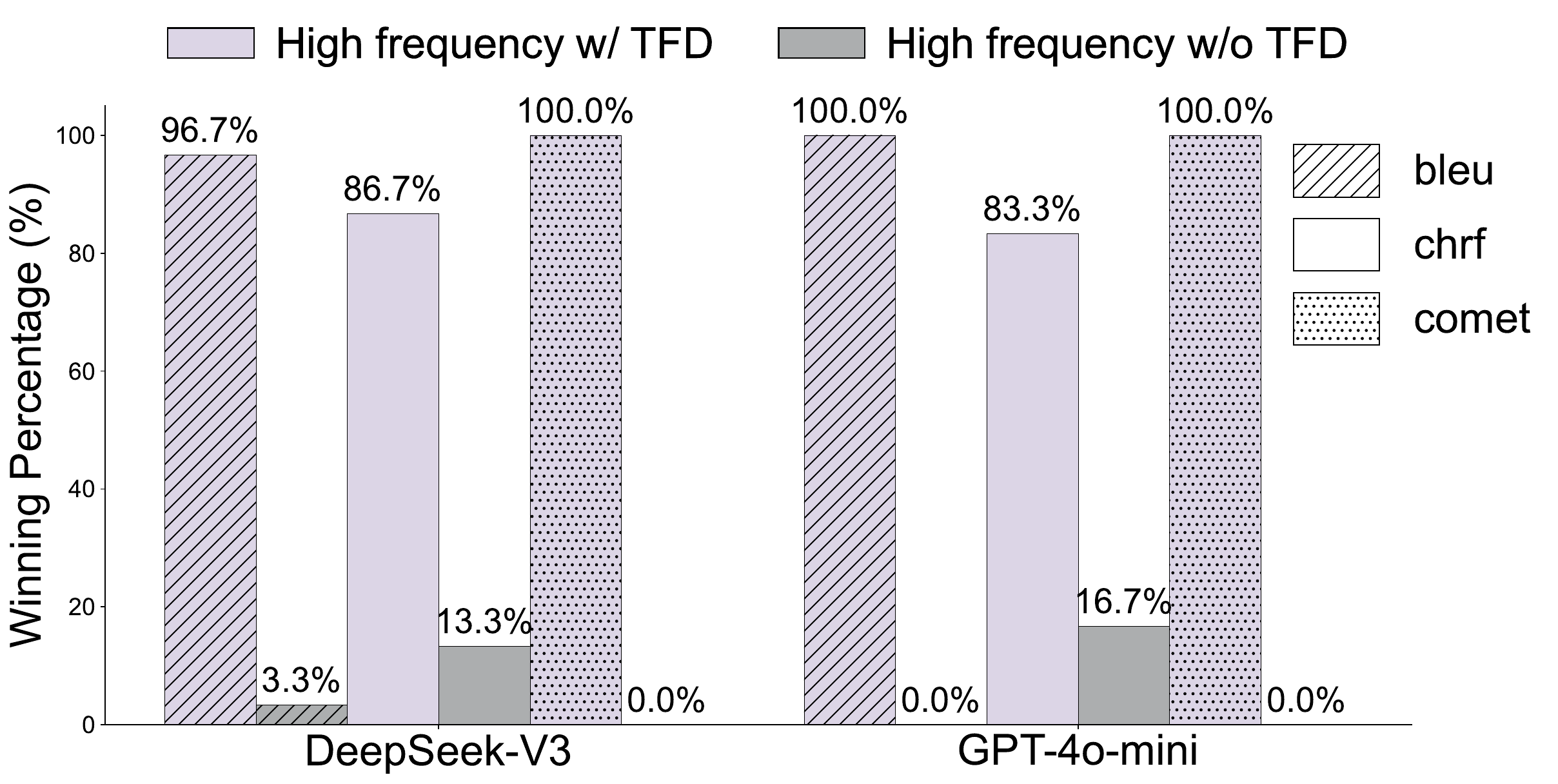}}
    \caption{The ablation study results of TFD on TFPD. The results are compared on BLEU, chrf and COMET. The bars are plotted in terms of the winning percentages.}
    \label{fig:abstfd}
\end{center}
\vspace{-5mm}
\end{figure}

Figure \ref{fig:abstfd} presents the ablation study on TFD. It is obvious that removing TFD causes a drop in performance. For example, 100\% of the language pairs are better with TFD on COMET scores with DeepSeek-V3. This validates the usefulness of TFD. Figure \ref{fig:tfd2} also demonstrates the relationship between the amount of data used for frequency distillation and the performance improvement. Overall, with more data used for TFD, there is a greater performance gain. This further validates the usefulness of TFD. Finally, combining prompting with higher-frequency paraphrases on models with CTFT as a whole framework is useful, as presented in the Appendix in Table \ref{finetuningplus}.

\subsection{Correlation on Frequency} For space reasons, we present a correlation analysis between textual frequency and final translation performance, even when the instances are not paraphrases to each other using the full translation dataset in TFPD. We present the final results in Appendix in Table \ref{pptable}. There is are strong correlation (1.0) on multiple languages when translating from English. This strengths our claim.
\par 
Table \ref{complexity} represents the relationship between textual complexity and frequency, and we see that they have very weak correlation. This enhances the usefulness of our method by distingushing TFL from the traditional curriculum learning. Table \ref{complexity2} shows that in most bins, high-frequency prompts are better. Only in 1 bin [50\%-55\%], the low-frequency prompts are better on BLEU and chrF, but there are only 21 samples in this bin. This means that high-frequency prompts are consistently better.
\par 
Finally, we present a theoretical proof in Appendix to strength our claim.

\section{Conclusions}
This paper proposed a framework for textual frequency on LLMs, which is composed of three units, namely TFL, TFD, and CTFT. High-frequency inputs are suggested by our framework, in both tuning and training on LLMs, which can be combined with curriculum learning to improve final performance. We conduct experiments on tasks of Math Reasoning, Machine Translation on hundreds of language pairs, Commonsense Reasoning, and Agentic Tool Calling. Experimental results and extensive analysis suggest the effectiveness of our textual frequency framework. Extensive analysis indicates that when inputs are even different, the final outputs of LLMs are positively related to textual frequency, which further suggests the soundness of our proposed framework.

\section*{Limitations}
Using story completion to obtain frequency estimation can bring certain computational costs, yet this workaround the necessity of obtaining closed-resourced training corpora of LLMs, which is often unrealistic.
\section*{Ethical Statement}
We honour and support the ACL ARR Code of Ethics. The datasets used in this work are well-known and widely used, and the dataset pre-processing does not make use of any external textual resource. In our view, there is no known ethical issue. End-to-end pre-trained LLMs are also used, which are subjected to generating offensive context. But the above-mentioned issues are widely known to commonly exist for these models. Any content generated do not reflect the view of the authors.

\bibliography{custom}
\newpage
\appendix

\section*{Appendix}

\begin{table}[htb]
\centering
\setlength\tabcolsep{10pt}
\setlength\aboverulesep{0pt}\setlength\belowrulesep{0pt}
\setcellgapes{3pt}\makegapedcells
\begin{tabular}{c|c|c}
\hline
\multicolumn{3}{c}{\textit{Supported Languages by COMET}}\\
\hline
ell\_Grek &  spa\_Latn & bel\_Cyrl \\
acm\_Arab &  hrv\_Latn & mar\_Deva \\
srp\_Cyrl &  uig\_Arab & est\_Latn \\
pol\_Latn & ukr\_Cyrl & eus\_Latn \\ 
ajp\_Arab & mkd\_Cyrl & swe\_Latn \\
urd\_Arab & ind\_Latn & swh\_Latn\\
uzn\_Latn & fin\_Latn & ita\_Latn \\
kor\_Hang & lao\_Laoo & rus\_Cyrl \\
arb\_Arab & bul\_Cyrl & nld\_Latn \\
san\_Deva & ars\_Arab & lit\_Latn \\
tha\_Thai & glg\_Latn & slk\_Latn \\
cym\_Latn & dan\_Latn & snd\_Arab \\
som\_Latn& - & -\\
\hline
\end{tabular}
\caption{\label{np}
The list of 37 languages supported by our COMET model for evaluation on machine translation.
}
\end{table}
\begin{table*}[ht]  
\centering  
\small  
\begin{adjustbox}{max width=\textwidth}  
    \begin{tabular}{L|CC|L|CC|L|CC|L|CC|L|CC}  
        \toprule  
        \textbf{Language} & \textbf{Low} & \textbf{High} &   
        \textbf{Language} & \textbf{Low} & \textbf{High} &   
        \textbf{Language} & \textbf{Low} & \textbf{High} &   
        \textbf{Language} & \textbf{Low} & \textbf{High} &   
        \textbf{Language} & \textbf{Low} & \textbf{High} \\
        \midrule  

acm\_Arab&2.54&\hl{\textbf{3.29}} & acq\_Arab&3.07&\hl{\textbf{4.51}} & aeb\_Arab&2.36&\hl{\textbf{3.22}} & ajp\_Arab&2.7&\hl{\textbf{4.14}} & als\_Latn&9.36&\hl{\textbf{14.54}} \\
arb\_Arab&6.15&\hl{\textbf{9.24}} & ars\_Arab&5.2&\hl{\textbf{6.36}} & ary\_Arab&0.73&\hl{\textbf{0.98}} & arz\_Arab&3.17&\hl{\textbf{4.66}} & awa\_Deva&1.93&\hl{\textbf{3.06}} \\
ayr\_Latn&0.41&\hl{\textbf{0.59}} & ban\_Latn&3.7&\hl{\textbf{4.24}} & bel\_Cyrl&3.7&\hl{\textbf{5.66}} & bho\_Deva&3.38&\hl{\textbf{4.43}} & bjn\_Latn&5.31&\hl{\textbf{6.08}} \\
bul\_Cyrl&10.61&\hl{\textbf{16.97}} & ceb\_Latn&13.06&\hl{\textbf{17.26}} & ckb\_Arab&1.54&\hl{\textbf{2.47}} & crh\_Latn&1.67&\hl{\textbf{2.32}} & cym\_Latn&15.06&\hl{\textbf{20.64}} \\
dan\_Latn&12.58&\hl{\textbf{21.04}} & dzo\_Tibt&0.03&\hl{\textbf{0.04}} & ell\_Grek&7.97&\hl{\textbf{12.07}} & est\_Latn&6.1&\hl{\textbf{9.81}} & eus\_Latn&3.94&\hl{\textbf{6.09}} \\
ewe\_Latn&1.18&\hl{\textbf{1.54}} & fin\_Latn&5.42&\hl{\textbf{9.44}} & fon\_Latn&0.26&\hl{\textbf{0.39}} & glg\_Latn&11.01&\hl{\textbf{16.84}} & grn\_Latn&1.6&\hl{\textbf{2.0}} \\
guj\_Gujr&4.17&\hl{\textbf{6.97}} & hne\_Deva&2.08&\hl{\textbf{2.32}} & hrv\_Latn&8.74&\hl{\textbf{13.83}} & ilo\_Latn&7.99&\hl{\textbf{10.89}} & ind\_Latn&13.91&\hl{\textbf{20.26}} \\
ita\_Latn&9.86&\hl{\textbf{15.95}} & kab\_Latn&1.03&\hl{\textbf{1.21}} & kac\_Latn&1.36&\hl{\textbf{1.72}} & kan\_Knda&2.71&\hl{\textbf{4.57}} & kas\_Arab&0.4&\hl{\textbf{0.55}} \\
kas\_Deva&0.16&\hl{\textbf{0.32}} & kat\_Geor&2.77&\hl{\textbf{4.72}} & kea\_Latn&3.84&\hl{\textbf{5.58}} & kmr\_Latn&2.45&\hl{\textbf{3.03}} & kon\_Latn&3.16&\hl{\textbf{4.43}} \\
kor\_Hang&2.95&\hl{\textbf{5.02}} & lao\_Laoo&1.46&\hl{\textbf{1.73}} & lin\_Latn&3.83&\hl{\textbf{5.06}} & lit\_Latn&6.62&\hl{\textbf{10.15}} & lmo\_Latn&2.99&\hl{\textbf{3.77}} \\
ltz\_Latn&6.69&\hl{\textbf{10.03}} & lug\_Latn&1.47&\hl{\textbf{2.04}} & luo\_Latn&1.04&\hl{\textbf{1.4}} & lus\_Latn&2.78&\hl{\textbf{3.26}} & mag\_Deva&4.2&\hl{\textbf{5.24}} \\
mai\_Deva&2.97&\hl{\textbf{3.97}} & mal\_Mlym&2.12&\hl{\textbf{3.22}} & mar\_Deva&3.04&\hl{\textbf{5.03}} & min\_Latn&6.05&\hl{\textbf{7.8}} & mkd\_Cyrl&10.06&\hl{\textbf{14.87}} \\
mlt\_Latn&7.73&\hl{\textbf{11.17}} & mya\_Mymr&0.54&\hl{\textbf{0.66}} & nld\_Latn&8.65&\hl{\textbf{12.27}} & nno\_Latn&8.55&\hl{\textbf{14.01}} & nob\_Latn&8.54&\hl{\textbf{12.84}} \\
pbt\_Arab&2.29&\hl{\textbf{3.08}} & pol\_Latn&6.74&\hl{\textbf{11.03}} & prs\_Arab&5.27&\hl{\textbf{7.65}} & quy\_Latn&0.55&\hl{\textbf{0.68}} & run\_Latn&1.47&\hl{\textbf{2.05}} \\
rus\_Cyrl&9.42&\hl{\textbf{16.08}} & sag\_Latn&0.67&\hl{\textbf{0.9}} & san\_Deva&0.11&\hl{\textbf{0.38}} & sat\_Olck&1.0&\hl{\textbf{1.71}} & scn\_Latn&4.08&\hl{\textbf{5.94}} \\
sin\_Sinh&2.46&\hl{\textbf{3.96}} & slk\_Latn&8.49&\hl{\textbf{13.31}} & sna\_Latn&2.94&\hl{\textbf{4.54}} & snd\_Arab&4.35&\hl{\textbf{6.15}} & som\_Latn&2.5&\hl{\textbf{3.25}} \\
spa\_Latn&10.36&\hl{\textbf{14.89}} & srd\_Latn&7.44&\hl{\textbf{11.66}} & srp\_Cyrl&8.61&\hl{\textbf{14.29}} & ssw\_Latn&1.25&\hl{\textbf{1.53}} & sun\_Latn&5.6&\hl{\textbf{7.42}} \\
swe\_Latn&11.7&\hl{\textbf{18.63}} & swh\_Latn&10.76&\hl{\textbf{15.63}} & szl\_Latn&3.45&\hl{\textbf{5.02}} & tat\_Cyrl&3.75&\hl{\textbf{5.86}} & tgk\_Cyrl&4.5&\hl{\textbf{6.25}} \\
tgl\_Latn&14.73&\hl{\textbf{19.68}} & tha\_Thai&0.95&\hl{\textbf{1.3}} & tpi\_Latn&8.8&\hl{\textbf{10.85}} & twi\_Latn&2.44&\hl{\textbf{3.08}} & uig\_Arab&1.5&\hl{\textbf{2.27}} \\
ukr\_Cyrl&7.8&\hl{\textbf{12.75}} & urd\_Arab&6.26&\hl{\textbf{9.62}} & uzn\_Latn&4.1&\hl{\textbf{5.98}} & war\_Latn&10.42&\hl{\textbf{13.29}} & zho\_Hans & \hl{\textbf{0.42}}&0.26 \\

        \bottomrule  
    \end{tabular}  
\end{adjustbox}  
\caption{Results on DEEPSEEK-V3 in BLEU scores on 100 languages from English into other languages.}  
\label{tab:language_metrics1}  
\end{table*}

\begin{table*}[ht]  
\centering  
\small  
\begin{adjustbox}{max width=\textwidth}  
    \begin{tabular}{L|CC|L|CC|L|CC|L|CC|L|CC}  
        \toprule  
        \textbf{Language} & \textbf{Low} & \textbf{High} &   
        \textbf{Language} & \textbf{Low} & \textbf{High} &   
        \textbf{Language} & \textbf{Low} & \textbf{High} &   
        \textbf{Language} & \textbf{Low} & \textbf{High} &   
        \textbf{Language} & \textbf{Low} & \textbf{High} \\
        \midrule  

acm\_Arab&36.94&\hl{\textbf{38.42}} & acq\_Arab&37.07&\hl{\textbf{39.17}} & aeb\_Arab&34.44&\hl{\textbf{36.79}} & ajp\_Arab&37.79&\hl{\textbf{40.58}} & als\_Latn&42.09&\hl{\textbf{45.98}} \\
arb\_Arab&40.87&\hl{\textbf{45.4}} & ars\_Arab&39.43&\hl{\textbf{41.77}} & ary\_Arab&30.51&\hl{\textbf{31.48}} & arz\_Arab&36.87&\hl{\textbf{39.65}} & awa\_Deva&30.45&\hl{\textbf{31.84}} \\
ayr\_Latn&28.95&\hl{\textbf{30.13}} & ban\_Latn&36.79&\hl{\textbf{38.27}} & bel\_Cyrl&35.88&\hl{\textbf{38.77}} & bho\_Deva&31.92&\hl{\textbf{33.93}} & bjn\_Latn&41.31&\hl{\textbf{42.63}} \\
bul\_Cyrl&45.11&\hl{\textbf{50.17}} & ceb\_Latn&46.96&\hl{\textbf{49.72}} & ckb\_Arab&38.68&\hl{\textbf{40.8}} & crh\_Latn&34.55&\hl{\textbf{36.55}} & cym\_Latn&45.09&\hl{\textbf{49.54}} \\
dan\_Latn&45.89&\hl{\textbf{51.74}} & dzo\_Tibt&33.62&\hl{\textbf{33.99}} & ell\_Grek&39.16&\hl{\textbf{43.04}} & est\_Latn&44.16&\hl{\textbf{47.71}} & eus\_Latn&44.93&\hl{\textbf{48.09}} \\
ewe\_Latn&27.1&\hl{\textbf{27.87}} & fin\_Latn&44.2&\hl{\textbf{48.79}} & fon\_Latn&16.56&\hl{\textbf{17.63}} & glg\_Latn&43.55&\hl{\textbf{48.02}} & grn\_Latn&30.24&\hl{\textbf{31.04}} \\
guj\_Gujr&37.1&\hl{\textbf{40.45}} & hne\_Deva&29.64&\hl{\textbf{30.44}} & hrv\_Latn&42.99&\hl{\textbf{48.09}} & ilo\_Latn&43.98&\hl{\textbf{46.68}} & ind\_Latn&50.07&\hl{\textbf{55.44}} \\
ita\_Latn&44.39&\hl{\textbf{48.37}} & kab\_Latn&26.12&\hl{\textbf{27.1}} & kac\_Latn&27.85&\hl{\textbf{29.04}} & kan\_Knda&40.54&\hl{\textbf{44.1}} & kas\_Arab&22.95&\hl{\textbf{23.67}} \\
kas\_Deva&17.15&\hl{\textbf{17.88}} & kat\_Geor&41.81&\hl{\textbf{44.9}} & kea\_Latn&35.78&\hl{\textbf{37.61}} & kmr\_Latn&33.7&\hl{\textbf{35.35}} & kon\_Latn&36.24&\hl{\textbf{37.43}} \\
kor\_Hang&25.15&\hl{\textbf{29.23}} & lao\_Laoo&38.13&\hl{\textbf{39.98}} & lin\_Latn&38.71&\hl{\textbf{40.32}} & lit\_Latn&42.62&\hl{\textbf{47.39}} & lmo\_Latn&30.0&\hl{\textbf{31.49}} \\
ltz\_Latn&41.44&\hl{\textbf{45.1}} & lug\_Latn&34.0&\hl{\textbf{35.39}} & luo\_Latn&27.14&\hl{\textbf{27.33}} & lus\_Latn&33.3&\hl{\textbf{34.14}} & mag\_Deva&32.78&\hl{\textbf{33.65}} \\
mai\_Deva&34.31&\hl{\textbf{36.05}} & mal\_Mlym&40.38&\hl{\textbf{43.88}} & mar\_Deva&38.07&\hl{\textbf{40.93}} & min\_Latn&41.89&\hl{\textbf{44.06}} & mkd\_Cyrl&44.65&\hl{\textbf{48.62}} \\
mlt\_Latn&42.62&\hl{\textbf{47.33}} & mya\_Mymr&42.56&\hl{\textbf{44.17}} & nld\_Latn&44.19&\hl{\textbf{47.9}} & nno\_Latn&42.43&\hl{\textbf{46.45}} & nob\_Latn&43.35&\hl{\textbf{46.56}} \\
pbt\_Arab&28.45&\hl{\textbf{30.04}} & pol\_Latn&40.63&\hl{\textbf{44.69}} & prs\_Arab&36.75&\hl{\textbf{39.93}} & quy\_Latn&35.37&\hl{\textbf{35.86}} & run\_Latn&31.63&\hl{\textbf{33.21}} \\
rus\_Cyrl&43.15&\hl{\textbf{47.97}} & sag\_Latn&20.36&\hl{\textbf{21.52}} & san\_Deva&27.9&\hl{\textbf{29.61}} & sat\_Olck&28.63&\hl{\textbf{29.95}} & scn\_Latn&36.77&\hl{\textbf{39.9}} \\
sin\_Sinh&35.75&\hl{\textbf{38.05}} & slk\_Latn&40.81&\hl{\textbf{45.41}} & sna\_Latn&41.51&\hl{\textbf{43.93}} & snd\_Arab&33.16&\hl{\textbf{36.0}} & som\_Latn&36.94&\hl{\textbf{38.57}} \\
spa\_Latn&42.38&\hl{\textbf{46.18}} & srd\_Latn&40.51&\hl{\textbf{43.96}} & srp\_Cyrl&41.7&\hl{\textbf{47.09}} & ssw\_Latn&36.89&\hl{\textbf{38.53}} & sun\_Latn&41.18&\hl{\textbf{44.26}} \\
swe\_Latn&45.86&\hl{\textbf{51.3}} & swh\_Latn&47.58&\hl{\textbf{50.84}} & szl\_Latn&34.92&\hl{\textbf{36.88}} & tat\_Cyrl&39.43&\hl{\textbf{42.32}} & tgk\_Cyrl&38.33&\hl{\textbf{40.6}} \\
tgl\_Latn&49.2&\hl{\textbf{51.62}} & tha\_Thai&43.24&\hl{\textbf{47.0}} & tpi\_Latn&40.08&\hl{\textbf{40.7}} & twi\_Latn&31.6&\hl{\textbf{32.54}} & uig\_Arab&37.42&\hl{\textbf{39.42}} \\
ukr\_Cyrl&41.18&\hl{\textbf{45.68}} & urd\_Arab&37.15&\hl{\textbf{40.79}} & uzn\_Latn&44.17&\hl{\textbf{46.69}} & war\_Latn&44.35&\hl{\textbf{46.44}} & zho\_Hans&24.44&\hl{\textbf{30.46}} \\

        \bottomrule  
    \end{tabular}  
\end{adjustbox}  
\caption{Results on DEEPSEEK-V3 in chrF scores on 100 languages from English into other languages.}  
\label{tab:language_metrics2}  
\end{table*}  

\begin{table*}[ht]  
\centering  
\small  
\begin{adjustbox}{max width=\textwidth}  
    \begin{tabular}{L|CC|L|CC|L|CC|L|CC|L|CC}  
        \toprule  
        \textbf{Language} & \textbf{Low} & \textbf{High} &   
        \textbf{Language} & \textbf{Low} & \textbf{High} &   
        \textbf{Language} & \textbf{Low} & \textbf{High} &   
        \textbf{Language} & \textbf{Low} & \textbf{High} &   
        \textbf{Language} & \textbf{Low} & \textbf{High} \\
        \midrule  

acm\_Arab&79.82&\hl{\textbf{80.02}} & ajp\_Arab&78.76&\hl{\textbf{79.16}} & arb\_Arab&82.95&\hl{\textbf{85.1}} & ars\_Arab&82.48&\hl{\textbf{83.41}} & bel\_Cyrl&83.02&\hl{\textbf{84.84}} \\
bul\_Cyrl&85.82&\hl{\textbf{88.27}} & cym\_Latn&79.78&\hl{\textbf{82.84}} & dan\_Latn&84.29&\hl{\textbf{87.59}} & ell\_Grek&84.76&\hl{\textbf{86.84}} & est\_Latn&87.09&\hl{\textbf{89.11}} \\
eus\_Latn&82.08&\hl{\textbf{84.12}} & fin\_Latn&87.54&\hl{\textbf{90.23}} & glg\_Latn&81.05&\hl{\textbf{83.64}} & hrv\_Latn&86.29&\hl{\textbf{88.58}} & ind\_Latn&86.74&\hl{\textbf{89.0}} \\
ita\_Latn&82.92&\hl{\textbf{85.22}} & kor\_Hang&86.78&\hl{\textbf{88.26}} & lao\_Laoo&80.56&\hl{\textbf{81.85}} & lit\_Latn&85.09&\hl{\textbf{88.26}} & mar\_Deva&69.13&\hl{\textbf{71.45}} \\
mkd\_Cyrl&83.87&\hl{\textbf{86.4}} & nld\_Latn&82.36&\hl{\textbf{85.33}} & pol\_Latn&85.3&\hl{\textbf{87.72}} & rus\_Cyrl&85.49&\hl{\textbf{87.73}} & san\_Deva&70.71&\hl{\textbf{71.68}} \\
slk\_Latn&84.98&\hl{\textbf{87.64}} & snd\_Arab&73.78&\hl{\textbf{75.95}} & som\_Latn&75.33&\hl{\textbf{76.9}} & spa\_Latn&80.78&\hl{\textbf{83.2}} & srp\_Cyrl&84.27&\hl{\textbf{86.98}} \\
swe\_Latn&84.61&\hl{\textbf{87.69}} & swh\_Latn&79.48&\hl{\textbf{81.43}} & tha\_Thai&85.22&\hl{\textbf{86.79}} & uig\_Arab&80.28&\hl{\textbf{81.85}} & ukr\_Cyrl&85.55&\hl{\textbf{87.98}} \\
urd\_Arab&78.62&\hl{\textbf{80.58}} & uzn\_Latn&86.76&\hl{\textbf{88.09}} \\
 
        \bottomrule  
    \end{tabular}  
\end{adjustbox}  
\caption{Results on DEEPSEEK-V3 in COMET scores on 37 supported languages from English into other languages.}  
\label{tab:language_metrics3}  
\end{table*}

\begin{table*}[ht]  
\centering  
\small  
\begin{adjustbox}{max width=\textwidth}  
    \begin{tabular}{L|CC|L|CC|L|CC|L|CC|L|CC}  
        \toprule  
        \textbf{Language} & \textbf{Low} & \textbf{High} &   
        \textbf{Language} & \textbf{Low} & \textbf{High} &   
        \textbf{Language} & \textbf{Low} & \textbf{High} &   
        \textbf{Language} & \textbf{Low} & \textbf{High} &   
        \textbf{Language} & \textbf{Low} & \textbf{High} \\
        \midrule  

acm\_Arab&2.83&\hl{\textbf{3.79}} & acq\_Arab&3.41&\hl{\textbf{4.43}} & aeb\_Arab&2.12&\hl{\textbf{3.19}} & ajp\_Arab&3.14&\hl{\textbf{4.28}} & als\_Latn&8.96&\hl{\textbf{12.97}} \\
arb\_Arab&5.77&\hl{\textbf{8.92}} & ars\_Arab&4.76&\hl{\textbf{6.17}} & ary\_Arab&1.27&\hl{\textbf{1.84}} & arz\_Arab&2.84&\hl{\textbf{4.6}} & awa\_Deva&2.23&\hl{\textbf{3.03}} \\
ayr\_Latn&0.52&\hl{\textbf{0.67}} & ban\_Latn&2.7&\hl{\textbf{3.47}} & bel\_Cyrl&3.07&\hl{\textbf{4.6}} & bho\_Deva&2.82&\hl{\textbf{4.06}} & bjn\_Latn&2.71&\hl{\textbf{3.12}} \\
bul\_Cyrl&10.5&\hl{\textbf{16.51}} & ceb\_Latn&12.3&\hl{\textbf{16.0}} & ckb\_Arab&0.2&\hl{\textbf{0.48}} & crh\_Latn&0.8&\hl{\textbf{1.08}} & cym\_Latn&12.34&\hl{\textbf{16.61}} \\
dan\_Latn&13.1&\hl{\textbf{19.69}} & dzo\_Tibt&0.08&\hl{\textbf{0.08}} & ell\_Grek&7.42&\hl{\textbf{11.25}} & est\_Latn&6.16&\hl{\textbf{9.59}} & eus\_Latn&2.49&\hl{\textbf{4.58}} \\
ewe\_Latn&0.61&\hl{\textbf{0.67}} & fin\_Latn&5.24&\hl{\textbf{8.74}} & fon\_Latn&0.26&\hl{\textbf{0.4}} & glg\_Latn&10.71&\hl{\textbf{15.36}} & grn\_Latn&1.07&\hl{\textbf{1.28}} \\
guj\_Gujr&2.58&\hl{\textbf{3.55}} & hne\_Deva&2.36&\hl{\textbf{3.3}} & hrv\_Latn&7.88&\hl{\textbf{12.22}} & ilo\_Latn&5.75&\hl{\textbf{7.02}} & ind\_Latn&13.96&\hl{\textbf{19.42}} \\
ita\_Latn&9.64&\hl{\textbf{14.48}} & kab\_Latn&0.36&\hl{\textbf{0.45}} & kac\_Latn&0.39&\hl{\textbf{0.46}} & kan\_Knda&1.39&\hl{\textbf{1.84}} & kas\_Arab&0.3&\hl{\textbf{0.35}} \\
kas\_Deva&0.09&\hl{\textbf{0.18}} & kat\_Geor&2.23&\hl{\textbf{2.85}} & kea\_Latn&2.54&\hl{\textbf{2.83}} & kmr\_Latn&1.27&\hl{\textbf{1.67}} & kon\_Latn&1.35&\hl{\textbf{1.44}} \\
kor\_Hang&3.45&\hl{\textbf{4.99}} & lao\_Laoo & \hl{\textbf{1.07}}&0.8 & lin\_Latn&2.77&\hl{\textbf{3.43}} & lit\_Latn&5.93&\hl{\textbf{9.37}} & lmo\_Latn&1.75&\hl{\textbf{2.06}} \\
ltz\_Latn&4.46&\hl{\textbf{6.0}} & lug\_Latn&1.3&\hl{\textbf{1.58}} & luo\_Latn&0.98&\hl{\textbf{1.25}} & lus\_Latn&2.1&\hl{\textbf{2.19}} & mag\_Deva&3.97&\hl{\textbf{5.16}} \\
mai\_Deva&3.09&\hl{\textbf{3.75}} & mal\_Mlym&0.64&\hl{\textbf{1.0}} & mar\_Deva&2.45&\hl{\textbf{3.19}} & min\_Latn&2.94&\hl{\textbf{3.51}} & mkd\_Cyrl&9.08&\hl{\textbf{12.63}} \\
mlt\_Latn&5.7&\hl{\textbf{8.11}} & mya\_Mymr&0.21&\hl{\textbf{0.33}} & nld\_Latn&8.38&\hl{\textbf{11.61}} & nno\_Latn&8.7&\hl{\textbf{13.66}} & nob\_Latn&8.77&\hl{\textbf{13.53}} \\
pbt\_Arab&2.14&\hl{\textbf{3.01}} & pol\_Latn&6.26&\hl{\textbf{10.36}} & prs\_Arab&5.24&\hl{\textbf{7.12}} & quy\_Latn & \hl{\textbf{0.53}}&0.51 & run\_Latn&1.84&\hl{\textbf{2.0}} \\
rus\_Cyrl&8.82&\hl{\textbf{14.08}} & sag\_Latn & \hl{\textbf{0.74}}&0.66 & san\_Deva & \hl{\textbf{0.22}}&0.2 & sat\_Olck&0.0&\hl{\textbf{0.15}} & scn\_Latn&3.05&\hl{\textbf{4.07}} \\
sin\_Sinh&0.66&\hl{\textbf{1.14}} & slk\_Latn&8.02&\hl{\textbf{12.21}} & sna\_Latn&2.28&\hl{\textbf{2.77}} & snd\_Arab&3.82&\hl{\textbf{5.65}} & som\_Latn&2.97&\hl{\textbf{3.94}} \\
spa\_Latn&10.31&\hl{\textbf{13.52}} & srd\_Latn&3.05&\hl{\textbf{3.67}} & srp\_Cyrl&7.26&\hl{\textbf{12.01}} & ssw\_Latn&0.78&\hl{\textbf{0.93}} & sun\_Latn&4.62&\hl{\textbf{6.5}} \\
swe\_Latn&11.57&\hl{\textbf{19.5}} & swh\_Latn&10.94&\hl{\textbf{14.41}} & szl\_Latn&2.34&\hl{\textbf{2.74}} & tat\_Cyrl&3.45&\hl{\textbf{5.06}} & tgk\_Cyrl&3.43&\hl{\textbf{4.92}} \\
tgl\_Latn&15.29&\hl{\textbf{19.16}} & tha\_Thai&1.24&\hl{\textbf{1.61}} & tpi\_Latn&5.91&\hl{\textbf{6.67}} & twi\_Latn&1.87&\hl{\textbf{2.06}} & uig\_Arab&0.31&\hl{\textbf{0.49}} \\
ukr\_Cyrl&7.91&\hl{\textbf{12.1}} & urd\_Arab&5.38&\hl{\textbf{8.28}} & uzn\_Latn&3.76&\hl{\textbf{4.95}} & war\_Latn&11.11&\hl{\textbf{13.62}} & zho\_Hans & \hl{\textbf{0.59}}&0.33 \\

        \bottomrule  
    \end{tabular}  
\end{adjustbox}  
\caption{Results on GPT4o-mini in BLEU scores on 100 languages from English into other languages.}  
\label{tab:language_metrics4}  
\end{table*}

\begin{table*}[ht]  
\centering  
\small  
\begin{adjustbox}{max width=\textwidth}  
    \begin{tabular}{L|CC|L|CC|L|CC|L|CC|L|CC}  
        \toprule  
        \textbf{Language} & \textbf{Low} & \textbf{High} &   
        \textbf{Language} & \textbf{Low} & \textbf{High} &   
        \textbf{Language} & \textbf{Low} & \textbf{High} &   
        \textbf{Language} & \textbf{Low} & \textbf{High} &   
        \textbf{Language} & \textbf{Low} & \textbf{High} \\
        \midrule  

acm\_Arab&36.79&\hl{\textbf{38.99}} & acq\_Arab&35.92&\hl{\textbf{38.38}} & aeb\_Arab&33.7&\hl{\textbf{36.16}} & ajp\_Arab&37.53&\hl{\textbf{40.66}} & als\_Latn&41.18&\hl{\textbf{44.51}} \\
arb\_Arab&39.4&\hl{\textbf{43.97}} & ars\_Arab&38.25&\hl{\textbf{40.32}} & ary\_Arab&32.15&\hl{\textbf{34.47}} & arz\_Arab&35.72&\hl{\textbf{38.95}} & awa\_Deva&29.79&\hl{\textbf{31.08}} \\
ayr\_Latn&25.05&\hl{\textbf{25.17}} & ban\_Latn&34.58&\hl{\textbf{36.19}} & bel\_Cyrl&34.16&\hl{\textbf{36.62}} & bho\_Deva&29.98&\hl{\textbf{32.09}} & bjn\_Latn&33.84&\hl{\textbf{35.02}} \\
bul\_Cyrl&43.87&\hl{\textbf{48.82}} & ceb\_Latn&45.81&\hl{\textbf{48.45}} & ckb\_Arab&27.2&\hl{\textbf{28.15}} & crh\_Latn&27.31&\hl{\textbf{28.67}} & cym\_Latn&41.43&\hl{\textbf{45.22}} \\
dan\_Latn&45.5&\hl{\textbf{50.65}} & dzo\_Tibt&22.06&\hl{\textbf{22.24}} & ell\_Grek&38.47&\hl{\textbf{42.19}} & est\_Latn&42.39&\hl{\textbf{46.11}} & eus\_Latn&41.21&\hl{\textbf{44.4}} \\
ewe\_Latn & \hl{\textbf{19.29}}&19.06 & fin\_Latn&42.96&\hl{\textbf{47.34}} & fon\_Latn & \hl{\textbf{13.41}}&13.24 & glg\_Latn&42.29&\hl{\textbf{46.29}} & grn\_Latn&24.82&\hl{\textbf{25.43}} \\
guj\_Gujr&31.84&\hl{\textbf{34.51}} & hne\_Deva&29.56&\hl{\textbf{30.68}} & hrv\_Latn&41.98&\hl{\textbf{46.32}} & ilo\_Latn&39.6&\hl{\textbf{42.04}} & ind\_Latn&49.16&\hl{\textbf{53.94}} \\
ita\_Latn&43.36&\hl{\textbf{47.28}} & kab\_Latn & \hl{\textbf{18.8}}&18.53 & kac\_Latn & \hl{\textbf{20.99}}&20.59 & kan\_Knda&33.04&\hl{\textbf{35.91}} & kas\_Arab&18.55&\hl{\textbf{18.61}} \\
kas\_Deva&15.25&\hl{\textbf{15.67}} & kat\_Geor&38.23&\hl{\textbf{40.15}} & kea\_Latn&31.77&\hl{\textbf{32.95}} & kmr\_Latn&28.39&\hl{\textbf{28.97}} & kon\_Latn & \hl{\textbf{26.63}}&26.4 \\
kor\_Hang&23.67&\hl{\textbf{27.85}} & lao\_Laoo&21.92&\hl{\textbf{22.96}} & lin\_Latn&34.6&\hl{\textbf{35.08}} & lit\_Latn&41.27&\hl{\textbf{45.91}} & lmo\_Latn&28.13&\hl{\textbf{29.46}} \\
ltz\_Latn&37.25&\hl{\textbf{40.18}} & lug\_Latn&28.77&\hl{\textbf{29.4}} & luo\_Latn & \hl{\textbf{24.04}}&23.58 & lus\_Latn&28.41&\hl{\textbf{29.05}} & mag\_Deva&31.85&\hl{\textbf{33.36}} \\
mai\_Deva&32.64&\hl{\textbf{34.48}} & mal\_Mlym&33.35&\hl{\textbf{35.19}} & mar\_Deva&35.55&\hl{\textbf{37.93}} & min\_Latn&33.55&\hl{\textbf{35.11}} & mkd\_Cyrl&42.78&\hl{\textbf{46.41}} \\
mlt\_Latn&39.32&\hl{\textbf{42.66}} & mya\_Mymr&33.82&\hl{\textbf{34.75}} & nld\_Latn&43.18&\hl{\textbf{46.85}} & nno\_Latn&41.2&\hl{\textbf{45.15}} & nob\_Latn&42.66&\hl{\textbf{46.2}} \\
pbt\_Arab&28.0&\hl{\textbf{29.61}} & pol\_Latn&39.07&\hl{\textbf{42.88}} & prs\_Arab&36.32&\hl{\textbf{39.37}} & quy\_Latn & \hl{\textbf{27.19}}&27.12 & run\_Latn&32.32&\hl{\textbf{33.95}} \\
rus\_Cyrl&41.19&\hl{\textbf{45.64}} & sag\_Latn & \hl{\textbf{15.51}}&15.31 & san\_Deva&25.63&\hl{\textbf{26.42}} & sat\_Olck & \hl{\textbf{15.09}}&14.84 & scn\_Latn&33.87&\hl{\textbf{36.14}} \\
sin\_Sinh&26.95&\hl{\textbf{28.23}} & slk\_Latn&39.56&\hl{\textbf{44.11}} & sna\_Latn&38.08&\hl{\textbf{39.71}} & snd\_Arab&32.23&\hl{\textbf{35.04}} & som\_Latn&37.31&\hl{\textbf{39.34}} \\
spa\_Latn&41.63&\hl{\textbf{45.04}} & srd\_Latn&33.17&\hl{\textbf{33.97}} & srp\_Cyrl&39.95&\hl{\textbf{44.62}} & ssw\_Latn&31.16&\hl{\textbf{32.14}} & sun\_Latn&39.64&\hl{\textbf{42.83}} \\
swe\_Latn&44.77&\hl{\textbf{50.2}} & swh\_Latn&46.01&\hl{\textbf{49.52}} & szl\_Latn&30.11&\hl{\textbf{31.39}} & tat\_Cyrl&37.31&\hl{\textbf{40.06}} & tgk\_Cyrl&35.62&\hl{\textbf{37.94}} \\
tgl\_Latn&48.18&\hl{\textbf{50.73}} & tha\_Thai&41.28&\hl{\textbf{44.1}} & tpi\_Latn&35.17&\hl{\textbf{35.32}} & twi\_Latn&27.28&\hl{\textbf{28.16}} & uig\_Arab&29.46&\hl{\textbf{30.7}} \\
ukr\_Cyrl&40.25&\hl{\textbf{44.27}} & urd\_Arab&35.43&\hl{\textbf{38.62}} & uzn\_Latn&42.67&\hl{\textbf{45.03}} & war\_Latn&43.92&\hl{\textbf{46.35}} & zho\_Hans&22.72&\hl{\textbf{27.62}} \\
        \bottomrule  
    \end{tabular}  
\end{adjustbox}  
\caption{Results on GPT-4o-mini in chrF scores on 100 languages from English into other languages.}  
\label{tab:language_metrics5}  
\end{table*}

\begin{table*}[ht]  
\centering  
\small  
\begin{adjustbox}{max width=\textwidth}  
    \begin{tabular}{L|CC|L|CC|L|CC|L|CC|L|CC}  
        \toprule  
        \textbf{Language} & \textbf{Low} & \textbf{High} &   
        \textbf{Language} & \textbf{Low} & \textbf{High} &   
        \textbf{Language} & \textbf{Low} & \textbf{High} &   
        \textbf{Language} & \textbf{Low} & \textbf{High} &   
        \textbf{Language} & \textbf{Low} & \textbf{High} \\
        \midrule  

acm\_Arab & \hl{\textbf{79.87}}&79.66 & ajp\_Arab&77.15&\hl{\textbf{78.26}} & arb\_Arab&81.29&\hl{\textbf{83.8}} & ars\_Arab&81.23&\hl{\textbf{81.5}} & bel\_Cyrl&79.82&\hl{\textbf{82.32}} \\
bul\_Cyrl&83.77&\hl{\textbf{87.16}} & cym\_Latn&75.39&\hl{\textbf{79.75}} & dan\_Latn&83.2&\hl{\textbf{86.9}} & ell\_Grek&83.91&\hl{\textbf{86.36}} & est\_Latn&85.22&\hl{\textbf{87.48}} \\
eus\_Latn&77.48&\hl{\textbf{80.95}} & fin\_Latn&86.45&\hl{\textbf{89.23}} & glg\_Latn&79.21&\hl{\textbf{82.63}} & hrv\_Latn&85.04&\hl{\textbf{87.91}} & ind\_Latn&85.61&\hl{\textbf{88.27}} \\
ita\_Latn&81.69&\hl{\textbf{84.54}} & kor\_Hang&85.16&\hl{\textbf{87.18}} & lao\_Laoo&48.57&\hl{\textbf{51.22}} & lit\_Latn&84.51&\hl{\textbf{87.16}} & mar\_Deva&65.34&\hl{\textbf{68.35}} \\
mkd\_Cyrl&81.63&\hl{\textbf{84.46}} & nld\_Latn&81.21&\hl{\textbf{84.58}} & pol\_Latn&83.65&\hl{\textbf{86.67}} & rus\_Cyrl&83.43&\hl{\textbf{86.25}} & san\_Deva&63.69&\hl{\textbf{64.82}} \\
slk\_Latn&83.36&\hl{\textbf{86.67}} & snd\_Arab&72.46&\hl{\textbf{75.13}} & som\_Latn&76.32&\hl{\textbf{77.52}} & spa\_Latn&79.84&\hl{\textbf{82.72}} & srp\_Cyrl&81.37&\hl{\textbf{85.24}} \\
swe\_Latn&83.02&\hl{\textbf{86.87}} & swh\_Latn&78.6&\hl{\textbf{80.88}} & tha\_Thai&83.69&\hl{\textbf{85.28}} & uig\_Arab&63.02&\hl{\textbf{65.16}} & ukr\_Cyrl&84.11&\hl{\textbf{86.82}} \\
urd\_Arab&77.2&\hl{\textbf{79.5}} & uzn\_Latn&84.76&\hl{\textbf{86.58}} \\

        \bottomrule  
    \end{tabular}  
\end{adjustbox}  
\caption{Results on GPT-4o-mini in COMET scores on 37 supported languages from English into other languages.}  
\label{tab:language_metrics6}  
\end{table*}

\begin{table*}[thb!]
\centering
    \setlength\tabcolsep{10pt}
    \setlength\extrarowheight{0pt}
\begin{tabular}{l|ccc}
\hline
\textbf{Models} & \textbf{GPT-4o-mini} & \textbf{DeepSeek-V3} & \textbf{Qwen2.5-14B-Instruct}\\
\hline
 \multicolumn{4}{c}{\textit{Tool Selection Accuracy}}\\
\hline
Low-frequency partition & 0.6053 & 0.6140 &  0.6316\\
High-frequency partition & \textbf{0.6667} & \textbf{0.6404} & \textbf{0.6667}\\
\hline
 \multicolumn{4}{c}{\textit{Accuracy with Correct Tool Using}}\\
\hline
Low-frequency partition & 0.4386 & 0.4649 &  0.4298\\
High-frequency partition & \textbf{0.4912} & \textbf{0.4737} & \textbf{0.4474}\\
\hline
\end{tabular}
\caption{\label{tcresults}
Results reported in accuracy on the partition of TC. We see that the high-frequency partition gives better results on all baseline models.
}
\end{table*}

\begin{figure}[ht!]
\begin{center}
\vspace{0mm}
\centerline{
\includegraphics[width=7.5cm]{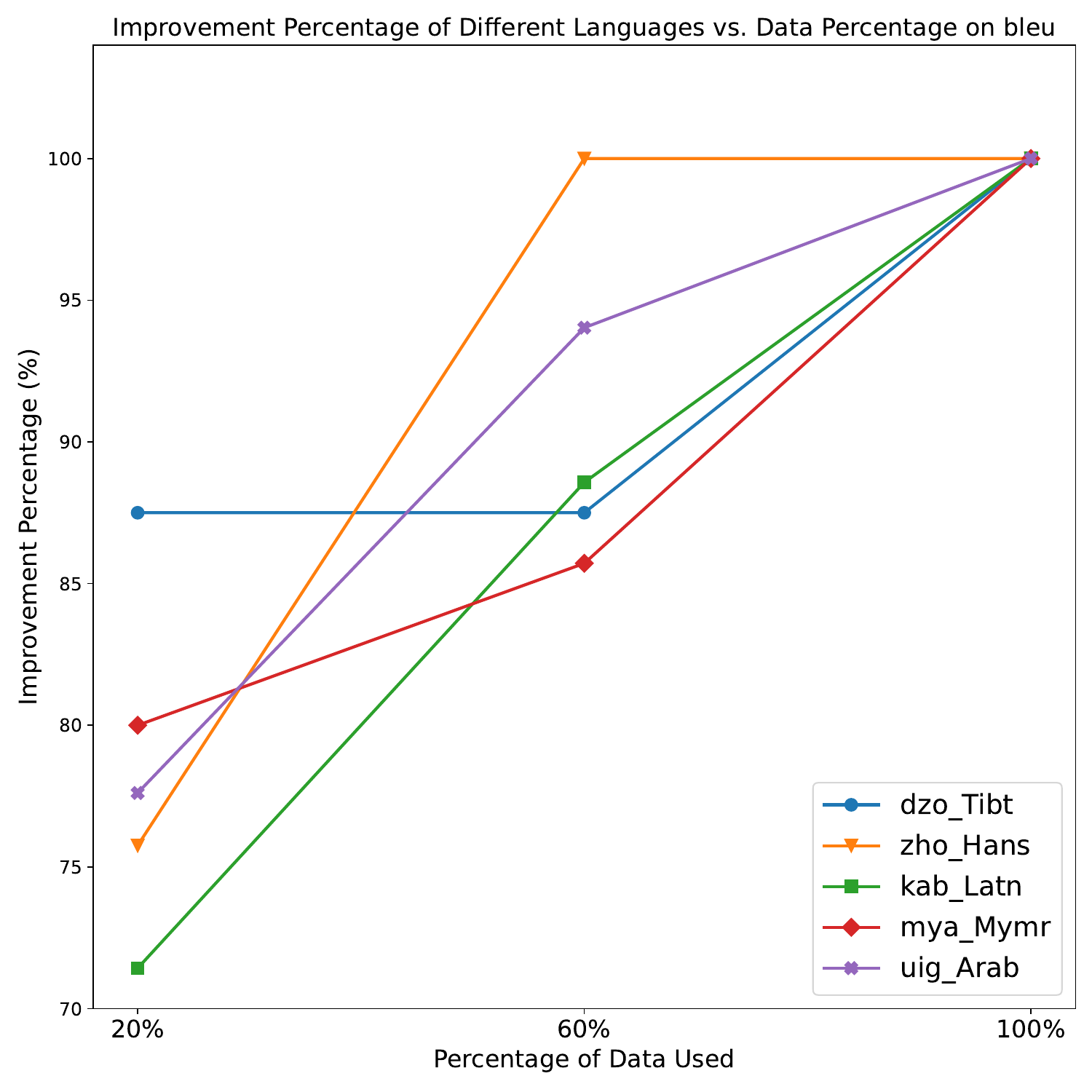}}
    \caption{The figure that demonstrates the relationship between performance percentage and the amount of data used for TFD. We can see that with more data used, the performance improvement increases.}
    \label{fig:tfd2}
\end{center}
\vspace{-5mm}
\end{figure}

\begin{table*}[ht!]
\centering
    \setlength\tabcolsep{10pt}
    \setlength\extrarowheight{1pt}
\begin{tabular}{l|cccc}
\hline
 
\textbf{Models} & \textbf{kea\_Latn} & \textbf{kik\_Latn} & \textbf{lvs\_Latn}&\textbf{pag\_Latn}\\
\hline
\multicolumn{5}{c}{\textit{BLEU}}\\
\hline
 low-frequency & 0.9504	& 0.6983 & 0.7781 & 0.9814\\
 high-frequency & \textbf{1.1528} & \textbf{0.7257} & \textbf{1.2053} & \textbf{1.0204}\\
\hline
\multicolumn{5}{c}{\textit{chrF}}\\
\hline
 low-frequency & 28.6936 & 22.1032 & 29.0109 & 29.8830\\
 high-frequency & \textbf{29.8472} & \textbf{22.9479} & \textbf{29.0681} & \textbf{30.4843}\\
\hline
\end{tabular}
\caption{\label{finetuningplus}
Results of using low-frequency and high-frequency partitions with fine-tuning models with CTFT on translation from English into other languages. COMET is not reported due to unsupported languages. Results indicate that prompting with higher-frequency paraphrases on the model tuned with CTFT is still useful.
}
\end{table*}

\begin{table}[htb]
\centering
\setlength\tabcolsep{8pt}
\setlength\aboverulesep{0pt}\setlength\belowrulesep{0pt}
\setcellgapes{3pt}\makegapedcells
\begin{tabular}{c|c}
\hline
\textbf{Language} & \textbf{Correlation}\\
\hline
ilo\_Latn & 0.9278\\
srp\_Cyrl & 0.8950\\
bho\_Deva&0.9506\\
lao\_Laoo&1.0000\\
mya\_Mymr&1.0000\\
kab\_Latn&1.0000\\
kas\_Deva&1.0000\\
\hline
\end{tabular}
\caption{\label{pptable}
The correlation between textual frequency and the final translation BLEU scores on translating from English into other languages. We compute Pearson correlation coefficients \citep{freedman2007statistics} using numpy.corrcoef().
}
\end{table}

\begin{figure*}[ht!]
\begin{center}
\vspace{0mm}
\centerline{
\includegraphics[width=15cm]{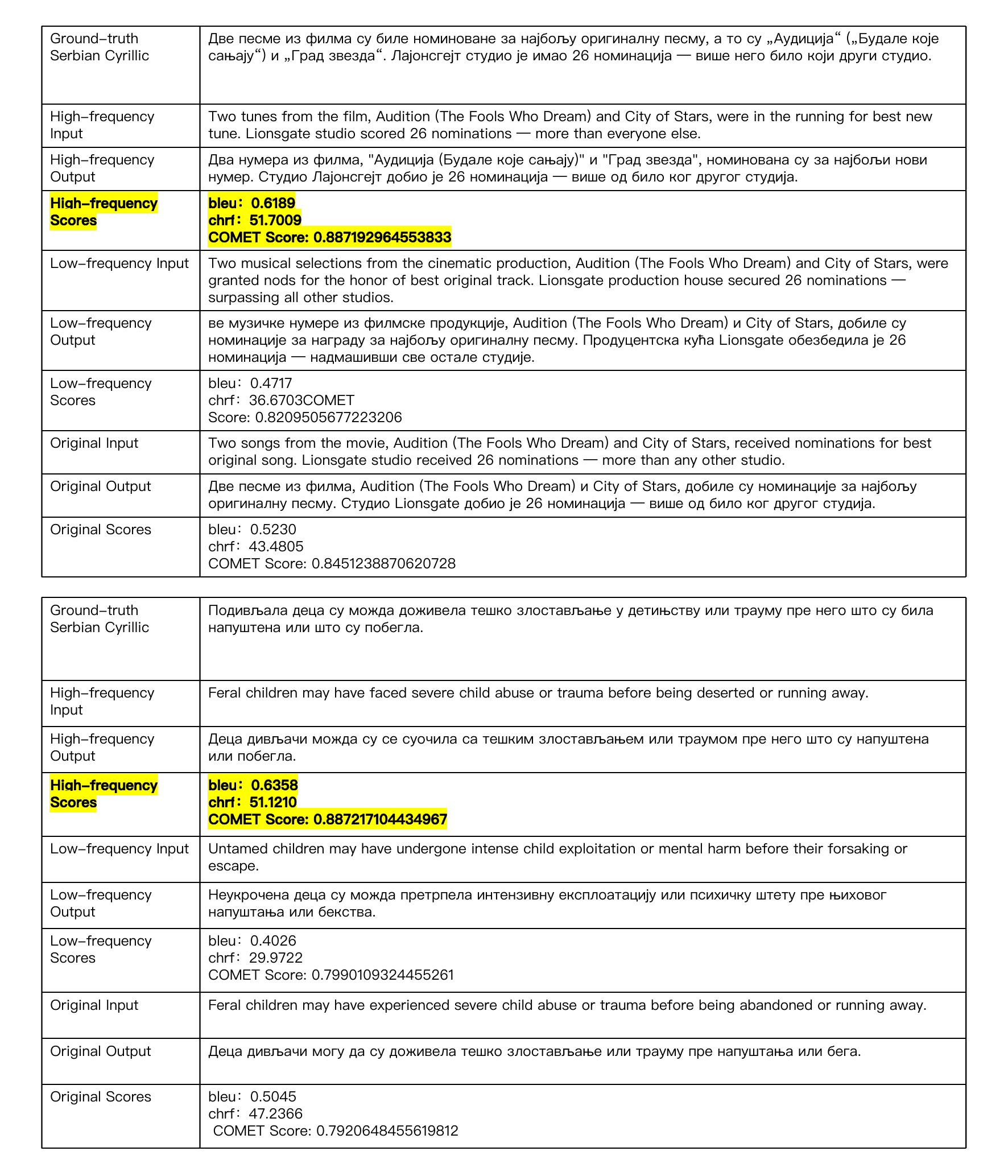}}
    \caption{Case studies on translating following our proposed framework. Best results are bolded and highlighted.}
    \label{fig:casetable}
\end{center}
\vspace{-5mm}
\end{figure*}

\begin{table}
\centering
\setlength\tabcolsep{15pt}
\setlength\aboverulesep{0pt}\setlength\belowrulesep{0pt}
\setcellgapes{3pt}\makegapedcells
\begin{tabular}{l|cc}
\hline
\textbf{Tasks}& \textbf{high-freq} & \textbf{low-freq} \\
\hline
\multicolumn{3}{c}{\textit{Math Reasoning}}\\
\hline
\#. Total & 526 & 526\\
0.0-1.5 & 3 & 60 \\
1.5-2.5 & 225 & 418 \\
2.5-3.5 & 239 & 45 \\
3.5-4.5 & 50 & 3 \\
4.5-5.5 & 9 & 0 \\
\hline
\multicolumn{3}{c}{\textit{Machine Translation}}\\
\hline
\#. Total & 738 & 738\\
1.0-1.5 & 198 & 73\\
1.5-2.0 & 397 & 402\\
2.0-2.5 & 132 & 216\\
2.5-3.0 & 10 & 41\\
3.0-3.5 & 1 & 6\\
\hline
\end{tabular}
\caption{\label{freqstat}
The statistics are based on the TFD calculations: We first statistically calculate the occurrence frequencies of unigrams and bigrams from both the web resources and the generated corpus, then assign different weights to the two corpora, and finally calculate the weighted geometric average of the unigram and bigram frequencies.
}
\end{table}
\begin{table}[htb]
\centering
\setlength\tabcolsep{8pt}
\setlength\aboverulesep{0pt}\setlength\belowrulesep{0pt}
\setcellgapes{3pt}\makegapedcells
\begin{tabular}{c|c|c}
\hline
\textbf{Model Size} & \textbf{Low} & \textbf{High}\\
\hline
0.5b & 0.273& 0.325\\
1.5b&0.442 & 0.484\\
3b & 0.528 & 0.581\\
7b & 0.595 & 0.671\\
14b & 0.600 & 0.690\\
32b & 0.612 & 0.680\\
72b & 0.610 & 0.686\\
\hline
\end{tabular}
\caption{\label{scaling}
The evaluation on different model sizes using qwen-2.5. The results are reported on the task of MR.
}
\end{table}

\begin{table}[htb]
\centering
\setlength\tabcolsep{8pt}
\setlength\aboverulesep{0pt}\setlength\belowrulesep{0pt}
\setcellgapes{3pt}\makegapedcells
\begin{tabular}{c|c}
\hline
\textbf{Hyperparameter} & \textbf{Value}\\
\hline
quantization\_bit & 4 \\  
stage & sft\\  
do\_train & true\\  
finetuning\_type & lora\\ 
lora\_target & all\\
template&qwen\\
cutoff\_len&1024\\
max\_samples&3000\\  
overwrite\_cache & true\\ 
preprocessing\_num\_workers&16\\  
logging\_steps&10\\  
save\_steps&500\\
per\_device\_train\_batch\_size&1\\  
gradient\_accumulation\_steps & 8\\  
learning\_rate & 1.0e-4\\
num\_train\_epochs & 10.0\\  
lr\_scheduler\_type & cosine\\
warmup\_ratio & 0.1\\
bf16 & true\\  
\hline
\end{tabular}
\caption{\label{np}
A list of hyperparameters used in our fine-tuning experiments.
}
\end{table}

\begin{table}[htb]
\centering
\setlength\tabcolsep{8pt}
\setlength\aboverulesep{0pt}\setlength\belowrulesep{0pt}
\setcellgapes{3pt}\makegapedcells
\begin{tabular}{c|c}
\hline
\textbf{Language Class} & \textbf{Number}\\
\hline
0&16\\
1&46\\
2&5\\
3&17\\
4&12\\
5&4\\
\hline
\end{tabular}
\caption{\label{lc}
A list of language classes of the 100 languages used in our experiments. More than half of the languages used in our study are relatively low-resource according to \citet{joshi-etal-2020-state}.
}
\end{table}

\begin{table}[htb]
\centering
\setlength\tabcolsep{8pt}
\setlength\aboverulesep{0pt}\setlength\belowrulesep{0pt}
\setcellgapes{3pt}\makegapedcells
\begin{tabular}{c|c|c}
\hline
\textbf{Metrics} & \textbf{Low} & \textbf{High}\\
\hline
chrF & 18.823 & 32.873\\
ROUGE & 0.175 & 0.310\\
BERTScore & 0.492 & 0.838\\
\hline
\end{tabular}
\caption{\label{chain-of-thought-evaluation}
The evaluation of the chain-of-thought process on the MR partition of our proposed TFPD dataset.
}
\end{table}

\onecolumn
\section{Scope and Proof Strategy}
\label{sec:scope}

This document provides a self-contained formal proof for the Textual
Frequency Law (TFL).  The central claim is:

\medskip
\noindent\textit{When two text sequences express the same meaning (i.e., are
paraphrases), the one with higher sentence-level frequency tends to incur a
lower negative log-likelihood (NLL) loss under a language model trained via
cross-entropy minimisation.}
\medskip

The proof proceeds in two parts.  Part~I (Section~\ref{sec:token}) establishes
the relationship between token-level NLL loss and token frequency rank under
Zipf's law.  Part~II (Section~\ref{sec:sentence}) lifts the token-level result
to the sentence level by introducing a sentence-frequency measure and
accounting for the gap between marginal and conditional token predictions.
Section~\ref{sec:discussion} discusses the relationship between the
mathematical conclusion (loss ordering) and the empirical observation (task
performance ordering).  Section~\ref{sec:limitations} catalogues the
limitations of the theoretical framework.

Throughout, all logarithms are natural logarithms (base $e$; units: nats).

\section{Notation}
\label{sec:notation}

\begin{itemize}[leftmargin=2em, itemsep=2pt]
  \item $V$: vocabulary (finite set of tokens).
  \item $w$: a token in $V$; $w_r$ denotes the token with frequency rank $r$
        ($r=1$ is the most frequent).
  \item $P(w)$: true marginal probability of token $w$ in the training
        distribution.
  \item $Q_\theta(w)$: marginal probability assigned to token $w$ by a language
        model with parameters $\theta$.
  \item $Q_\theta(w \mid c)$: conditional probability of $w$ given context $c$
        under the autoregressive model.
  \item $\ell^{\mathrm{m}}_\theta(w) \triangleq -\ln Q_\theta(w)$: \textbf{marginal}
        token-level NLL loss.
  \item $\ell^{\mathrm{c}}_\theta(x_k \mid x_{<k}) \triangleq
        -\ln Q_\theta(x_k \mid x_1, \dots, x_{k-1})$: \textbf{conditional}
        token-level NLL loss in an autoregressive model.
  \item $x = (x_1, x_2, \dots, x_K)$: a sentence (token sequence) of length $K$.
  \item $\ell_\theta(x) \triangleq \frac{1}{K}\sum_{k=1}^{K}
        \ell^{\mathrm{c}}_\theta(x_k \mid x_{<k})$: average conditional NLL
        loss of sentence $x$ --- the quantity the autoregressive model actually
        computes.
  \item $\sfreq(x)$: sentence-level frequency, defined in
        Assumption~\ref{ass:sfreq}.
  \item $Z = \sum_{n=1}^{|V|} n^{-s}$: Zipf normalisation constant;
        $C \triangleq \ln Z > 0$.
\end{itemize}

\begin{remark}[Marginal vs.\ conditional loss]
\label{rem:marg-cond}
It is essential to distinguish the marginal loss $\ell^{\mathrm{m}}_\theta(w)$
from the conditional loss $\ell^{\mathrm{c}}_\theta(x_k \mid x_{<k})$.  The
Zipf-based analysis in Part~I operates on marginal quantities.  Part~II bridges
to the conditional quantities that autoregressive models actually use, via an
explicit error term.
\end{remark}

\section{Assumptions}
\label{sec:assumptions}

We state four formal assumptions that the proof depends on, followed by one
contextual remark on the training objective.

\begin{assumption}[Zipf's Law for Token Frequencies]
\label{ass:zipf}
The true marginal probability of token $w_r$ with rank $r$ satisfies
\[
  P(w_r) = \frac{r^{-s}}{Z}, \quad s > 0,
  \quad Z = \sum_{n=1}^{|V|} n^{-s}.
\]
\end{assumption}

Zipf's law is a well-documented empirical regularity for the marginal frequency
of tokens aggregated over a large corpus.  It characterises the bulk of the
vocabulary distribution accurately, though deviations occur in the extreme tail
(very rare tokens).  We treat $s$ as a fixed positive constant.

\begin{assumption}[Rank-Dependent Log-Domain Approximation]
\label{ass:approx}
After training, for every token $w_r \in V$ there exists a rank-dependent
bound $\varepsilon(r) \ge 0$ such that
\begin{equation}\label{eq:approx}
  \bigl|\ln Q_\theta(w_r) - \ln P(w_r)\bigr| \le \varepsilon(r).
\end{equation}
\end{assumption}

\begin{remark}[Strength and character of Assumption~\ref{ass:approx}]
\label{rem:approx-strength}
Equation~\eqref{eq:approx} is equivalent to a multiplicative approximation
guarantee:
\[
  e^{-\varepsilon(r)} \le \frac{Q_\theta(w_r)}{P(w_r)} \le e^{\varepsilon(r)},
  \quad \forall\, r.
\]
This is a \emph{pointwise} condition on every token --- considerably stronger
than merely controlling the expected cross-entropy loss.  Standard
cross-entropy training minimises $\E_{w \sim P}[-\ln Q_\theta(w)]$, which
controls the $P$-weighted average loss but does not, by itself, guarantee
pointwise log-domain accuracy for each individual token.

We expect $\varepsilon(r)$ to be small for high-frequency tokens (small $r$),
because these tokens are observed abundantly during training and the model
receives strong gradient signal for them.  For low-frequency tokens (large $r$),
the model may see very few training examples, and $\varepsilon(r)$ is expected
to grow.  All subsequent results are stated in terms of $\varepsilon(r)$, so
the reader can assess the strength of each conclusion as a function of the
model's approximation quality at each frequency tier.

Assumption~\ref{ass:approx} is \textbf{not derivable} from the training
objective alone.  It is an empirical hypothesis about the outcome of
training --- motivated by the fact that cross-entropy minimisation encourages
$Q_\theta \to P$, but not logically entailed by it.

\smallskip\noindent\textbf{Empirical motivation.}
Although no existing study directly measures the pointwise bound $\varepsilon(r)$
as a function of rank, several independent lines of evidence support the
plausibility of Assumption~\ref{ass:approx}:

\begin{enumerate}[leftmargin=2em, label=(\alph*)]
  \item \textit{LLM token distributions follow Zipf's law.}
        \citet{mikhaylovskiy-2025-zipfs} shows that text generated by
        large language models obeys Zipf's law, though the fit quality depends
        on decoding temperature.  This indicates that the model's output
        distribution $Q_\theta$ preserves the rank--frequency structure of the
        training distribution $P$, a necessary (though not sufficient) condition
        for small $\varepsilon(r)$.

  \item \textit{LLMs encode token frequency in their prediction heads.}
        \citet{kobayashi-etal-2023-transformer} demonstrate that the
        bias terms in the prediction head of Transformer language models (BERT
        and GPT-2) significantly reflect corpus word frequency, effectively
        encoding a frequency prior consistent with logit adjustment in
        long-tail learning.  This suggests that the model's internal
        mechanism is structured in a way that facilitates accurate
        frequency-based predictions.

  \item \textit{Frequency modulates model--human surprisal alignment.}
        \citet{oh-etal-2024-frequency} find that word frequency systematically
        modulates the gap between LLM surprisal estimates and human reading
        times, with larger models predicting low-frequency words ``too
        accurately'' relative to human expectations.  This is consistent with
        the view that well-trained models achieve small $\varepsilon(r)$ for
        high-frequency tokens and progressively larger errors in the tail.

  \item \textit{Downstream performance correlates with Zipfian fit.}
        \citet{he-etal-2025-pre} show that pre-trained models
        consistently achieve optimal downstream performance when the vocabulary
        size is chosen so that the resulting token frequency distribution
        follows Zipf's law.  Their experiments across NLP, genomics, and
        chemistry establish a link between Zipfian alignment at the
        tokenisation level and model quality, reinforcing the broader premise
        that power-law regularity in the token distribution --- a key
        ingredient of Assumption~\ref{ass:approx} --- is conducive to
        effective language modelling.
\end{enumerate}

\noindent These findings collectively support the hypothesis that $\varepsilon(r)$ is
small for high-frequency tokens and grows with rank, but a direct empirical
characterisation of the pointwise bound remains an open problem.
\end{remark}

\begin{assumption}[Bounded Marginal--Conditional Discrepancy]
\label{ass:context}
For each token $x_k$ in a sentence $x = (x_1, \dots, x_K)$, define the
\textbf{contextual discrepancy}:
\[
  \eta_{x_k} \triangleq
  \ell^{\mathrm{c}}_\theta(x_k \mid x_{<k})
  - \ell^{\mathrm{m}}_\theta(x_k)
  = \ln Q_\theta(x_k) - \ln Q_\theta(x_k \mid x_1, \dots, x_{k-1}).
\]
We assume that for each sentence $x$, the average contextual discrepancy is
bounded:
\[
  |\bar{\eta}_x| \le \eta_x, \quad\text{where}\quad
  \bar{\eta}_x \triangleq \frac{1}{K} \sum_{k=1}^{K} \eta_{x_k},
\]
and $\eta_x \ge 0$ is a sentence-dependent bound.
\end{assumption}

\begin{remark}[Nature of $\eta_{x_k}$]
\label{rem:eta}
The sign and magnitude of $\eta_{x_k}$ depend on how informative the context
$x_{<k}$ is for predicting $x_k$:
\begin{itemize}[leftmargin=2em]
  \item $\eta_{x_k} < 0$: the context makes $x_k$ \emph{more} predictable than
        its marginal frequency suggests (conditional probability exceeds
        marginal).  This is typical for high-frequency function words in
        predictable contexts (e.g., ``of'' after ``United States'').
  \item $\eta_{x_k} > 0$: the context makes $x_k$ \emph{less} predictable
        (e.g., a token that is common in isolation but surprising in the given
        context).
  \item $\eta_{x_k} \approx 0$: the context is approximately uninformative for
        $x_k$.
\end{itemize}
For sentences composed of common, high-frequency tokens (the ``high-frequency
paraphrases'' central to TFL), many constituent tokens have highly predictable
collocations, so $\eta_{x_k}$ tends to be negative.  This directional tendency
is \emph{favourable} to TFL: it means the actual conditional loss is
systematically lower than the marginal-based estimate for high-frequency
sentences.  However, we do not rely on this tendency in the proof; instead, we
use the conservative absolute bound $|\bar{\eta}_x| \le \eta_x$.

Note that $\eta_x$ is \textbf{sentence-dependent}: different sentences may have
different bounds.  We do not assume a single universal bound across all
sentences.
\end{remark}

\begin{assumption}[Sentence Frequency via Geometric Mean of Token Frequencies]
\label{ass:sfreq}
The sentence-level frequency of $x = (x_1, \dots, x_K)$ is defined as
\[
  \sfreq(x) \triangleq \Bigl(\prod_{k=1}^{K} P(x_k)\Bigr)^{1/K},
\]
or equivalently in log-space:
\begin{equation}\label{eq:sfreq-log}
  \ln \sfreq(x) = \frac{1}{K}\sum_{k=1}^{K} \ln P(x_k).
\end{equation}
\end{assumption}

This definition treats sentence frequency as the geometric mean of marginal
token frequencies, corresponding to a unigram model for the sentence
probability.  It ignores word order and inter-token dependencies, which is a
deliberate simplification: the goal is a tractable frequency measure that
correlates with how ``common'' the constituent vocabulary of a sentence is.  For
comparing paraphrases with identical meaning but different word choices, this
measure captures precisely the relevant variation --- the frequency tier of the
vocabulary used.

\begin{remark}[Role of the training objective]
\label{rem:ce}
Standard language model training minimises the expected negative log-likelihood:
$\min_\theta \E_{w \sim P}[-\ln Q_\theta(w)]$.  This training objective
\emph{motivates} Assumption~\ref{ass:approx}: under ideal conditions with
sufficient capacity and data, the minimiser satisfies $Q_\theta(w) = P(w)$
for all $w$, which would give $\varepsilon(r) = 0$ everywhere.  In practice,
finite data and model capacity lead to nonzero $\varepsilon(r)$, particularly
for low-frequency tokens.  The training objective does \textbf{not} appear as a
formal assumption because the proof does not directly invoke it; it serves as
the background justification for why Assumption~\ref{ass:approx} is plausible.
\end{remark}

\section{Part I: Token-Level Results}
\label{sec:token}

\subsection{Step 1: Self-Information under Zipf's Law}

By Assumption~\ref{ass:zipf}, the self-information (ideal NLL) of token $w_r$ is
\begin{align}
  -\ln P(w_r)
  &= -\ln\!\left(\frac{r^{-s}}{Z}\right) \notag\\
  &= -(-s \ln r - \ln Z) \notag\\
  &= s \ln r + \ln Z. \label{eq:selfinfo}
\end{align}
Setting $C \triangleq \ln Z > 0$:
\begin{equation}\label{eq:selfinfo-short}
  -\ln P(w_r) = s \ln r + C.
\end{equation}
This shows that the ideal NLL is \emph{affine} in $\ln r$ with slope $s$ and
intercept $C$.

\subsection{Step 2: Model Loss Bounded by Approximation Error}

By Assumption~\ref{ass:approx}:
\[
  -\varepsilon(r) \le \ln Q_\theta(w_r) - \ln P(w_r) \le \varepsilon(r).
\]
Multiplying through by $-1$ (which reverses the inequalities):
\[
  -\ln P(w_r) - \varepsilon(r) \le -\ln Q_\theta(w_r) \le -\ln P(w_r) + \varepsilon(r).
\]
Defining $\ell^{\mathrm{m}}_\theta(w_r) \triangleq -\ln Q_\theta(w_r)$, we can
write:
\begin{equation}\label{eq:token-decomp}
  \ell^{\mathrm{m}}_\theta(w_r) = -\ln P(w_r) + \delta_{w_r},
  \qquad |\delta_{w_r}| \le \varepsilon(r),
\end{equation}
where $\delta_{w_r} \triangleq -\ln Q_\theta(w_r) - (-\ln P(w_r))
= \ln P(w_r) - \ln Q_\theta(w_r)$ is the signed approximation error for token
$w_r$.

\subsection{Step 3: Semi-Log Linear Relationship}

Substituting~\eqref{eq:selfinfo-short} into~\eqref{eq:token-decomp}:
\begin{equation}\label{eq:semi-log}
  \ell^{\mathrm{m}}_\theta(w_r)
  = s \ln r + C + \delta_{w_r},
  \qquad |\delta_{w_r}| \le \varepsilon(r).
\end{equation}

\begin{theorem}[Token-Level Semi-Log Linearity]
\label{thm:token}
Under Assumptions~\ref{ass:zipf} and~\ref{ass:approx}, the marginal token-level
NLL loss satisfies
\[
  \ell^{\mathrm{m}}_\theta(w_r) = s \ln r + C + \delta_{w_r},
  \quad |\delta_{w_r}| \le \varepsilon(r),
\]
where $s > 0$ is the Zipf exponent and $C = \ln Z > 0$.  In the semi-log plane
($x$-axis: $\ln r$; $y$-axis: $\ell^{\mathrm{m}}_\theta$), the relationship is
linear with slope $s$ and intercept $C$, within a rank-dependent error band of
half-width $\varepsilon(r)$.
\end{theorem}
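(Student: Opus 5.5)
The proof is a direct assembly of the two preceding steps; there is essentially nothing new to establish beyond checking the signs and the positivity of $C$. I would proceed in the following order.

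First, I invoke Assumption~\ref{ass:zipf} to compute the ideal self-information of $w_r$, exactly as in Step~1. Taking logarithms of $P(w_r)=r^{-s}/Z$ gives $-\ln P(w_r)=s\ln r+\ln Z$, which is equation~\eqref{eq:selfinfo-short}. I then verify the claim $C=\ln Z>0$ separately. Since $Z=\sum_{n=1}^{|V|}n^{-s}$ contains the term $1^{-s}=1$ and all other terms are strictly positive, $Z>1$ as soon as $|V|\ge 2$. Hence $\ln Z>0$. In the degenerate case $|V|=1$ we only get $C=0$, so I would note the implicit assumption $|V|\ge 2$.

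Second, I use Assumption~\ref{ass:approx} to define the signed error $\delta_{w_r}\triangleq\ln P(w_r)-\ln Q_\theta(w_r)$. The assumption gives $|\delta_{w_r}|\le\varepsilon(r)$ immediately, because the absolute value is symmetric under the sign flip. By construction, $\ell^{\mathrm{m}}_\theta(w_r)=-\ln Q_\theta(w_r)=-\ln P(w_r)+\delta_{w_r}$, which reproduces~\eqref{eq:token-decomp}. Substituting the first step into this identity yields~\eqref{eq:semi-log}, and that is the displayed equation of the theorem.

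Third, I give the geometric reading. Set $t=\ln r$ and define the line $L(t)=st+C$, which has slope $s$ and intercept $C$. The identity from the second step says that each point $\bigl(\ln r,\ell^{\mathrm{m}}_\theta(w_r)\bigr)$ satisfies $\bigl|\ell^{\mathrm{m}}_\theta(w_r)-L(\ln r)\bigr|\le\varepsilon(r)$. In other words, every point lies in a vertical band of half-width $\varepsilon(r)$ around $L$. Since $s>0$, the line is increasing in $\ln r$.

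The only step needing any care is the bookkeeping in the first step. One must check that $C$ really is strictly positive, which requires $|V|\ge 2$. One must also remember that the ranks are discrete, so "linear" means that the points lie on the sampled line $L$ up to the error band.
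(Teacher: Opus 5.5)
Your proposal is correct and follows the paper's own argument: the paper proves the theorem by chaining Step~1 (taking logarithms of the Zipf form), Step~2 (defining $\delta_{w_r}=\ln P(w_r)-\ln Q_\theta(w_r)$ and bounding it via Assumption~\ref{ass:approx}) and Step~3 (substitution), exactly as you do. Your side remark is also correct and is something the paper asserts without justification: $C=\ln Z>0$ holds because $Z=1+\sum_{n=2}^{|V|}n^{-s}>1$, which requires $|V|\ge 2$.
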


\begin{proof}
Immediate from the chain of equalities in Steps 1--3.
\end{proof}

\begin{remark}[Semi-log vs.\ log-log]
Equation~\eqref{eq:semi-log} is a \emph{semi-log} linear relationship
($\ell^{\mathrm{m}}_\theta$ is affine in $\ln r$), \textbf{not} a log-log
relationship (which would require $\ln \ell^{\mathrm{m}}_\theta$ to be affine in
$\ln r$, i.e., a power law for the loss itself).
\end{remark}

\subsection{Token-Level Monotonicity}

\begin{theorem}[Sufficient Condition for Strict Token-Level Monotonicity]
\label{thm:token-mono}
Let $w_i, w_j$ be two tokens with $r_i < r_j$ (i.e., $P(w_i) > P(w_j)$).
A sufficient condition for
$\ell^{\mathrm{m}}_\theta(w_i) < \ell^{\mathrm{m}}_\theta(w_j)$
is
\begin{equation}\label{eq:strict-mono}
  \varepsilon(r_i) + \varepsilon(r_j) < s \ln\!\left(\frac{r_j}{r_i}\right).
\end{equation}
In the special case of a uniform bound $\varepsilon(r) \equiv \varepsilon$, this
reduces to
\begin{equation}\label{eq:rank-ratio}
  \frac{r_j}{r_i} > e^{2\varepsilon/s}.
\end{equation}
\end{theorem}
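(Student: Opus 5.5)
The plan is to work directly from the semi-log representation of Theorem~\ref{thm:token}, equation~\eqref{eq:semi-log}, applied to both tokens. We write
\[
  \ell^{\mathrm{m}}_\theta(w_i) = s\ln r_i + C + \delta_{w_i},
  \qquad
  \ell^{\mathrm{m}}_\theta(w_j) = s\ln r_j + C + \delta_{w_j},
\]
with $|\delta_{w_i}|\le\varepsilon(r_i)$ and $|\delta_{w_j}|\le\varepsilon(r_j)$. Subtracting, the intercept $C$ cancels, and we are left with
\[
  \ell^{\mathrm{m}}_\theta(w_j)-\ell^{\mathrm{m}}_\theta(w_i)
  = s\ln\!\left(\frac{r_j}{r_i}\right) + \bigl(\delta_{w_j}-\delta_{w_i}\bigr).
\]

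The key step is a worst-case lower bound on the error difference. By the triangle inequality,
\[
  \delta_{w_j}-\delta_{w_i} \ge -\bigl(|\delta_{w_j}|+|\delta_{w_i}|\bigr) \ge -\bigl(\varepsilon(r_i)+\varepsilon(r_j)\bigr).
\]
Hence the loss difference is at least $s\ln(r_j/r_i) - \varepsilon(r_i) - \varepsilon(r_j)$, which is strictly positive exactly when \eqref{eq:strict-mono} holds. Since $r_i<r_j$ and $s>0$, the term $s\ln(r_j/r_i)$ is positive, so the condition is meaningful and not vacuous.

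For the uniform special case we substitute $\varepsilon(r)\equiv\varepsilon$, so the condition reads $2\varepsilon < s\ln(r_j/r_i)$. Dividing by $s>0$ and exponentiating, using that $\exp$ is strictly increasing, gives \eqref{eq:rank-ratio}.

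There is no real obstacle here; the argument is a two-line bound. The only point needing care is the direction of the inequality: we must bound $\delta_{w_j}-\delta_{w_i}$ from below, which is the adversarial case where the rarer token is underestimated and the common one overestimated. We should also note explicitly that the condition is only sufficient, not necessary.
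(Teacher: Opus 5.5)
Your proposal is correct and follows essentially the same route as the paper. The paper compares the worst-case upper bound $s\ln r_i + C + \varepsilon(r_i)$ on $\ell^{\mathrm{m}}_\theta(w_i)$ with the worst-case lower bound $s\ln r_j + C - \varepsilon(r_j)$ on $\ell^{\mathrm{m}}_\theta(w_j)$, cancels $C$, and rearranges; this is your lower bound on $\delta_{w_j}-\delta_{w_i}$ in a different form, and the uniform-bound case is handled the same way, by dividing by $s$ and exponentiating.
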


\begin{proof}
We require the worst-case upper bound of $\ell^{\mathrm{m}}_\theta(w_i)$ to be
strictly less than the worst-case lower bound of
$\ell^{\mathrm{m}}_\theta(w_j)$:
\[
  \bigl(s \ln r_i + C + \varepsilon(r_i)\bigr)
  < \bigl(s \ln r_j + C - \varepsilon(r_j)\bigr).
\]
Cancelling $C$ and rearranging:
\[
  \varepsilon(r_i) + \varepsilon(r_j) < s(\ln r_j - \ln r_i)
  = s \ln\!\left(\frac{r_j}{r_i}\right).
\]
When $\varepsilon(r) \equiv \varepsilon$, this becomes $2\varepsilon < s \ln(r_j/r_i)$,
i.e., $r_j/r_i > e^{2\varepsilon/s}$.
\end{proof}

\begin{remark}[When monotonicity fails]
\label{rem:mono-fail}
For adjacent-rank tokens ($r_j = r_i + 1$), the rank ratio is
$1 + 1/r_i \to 1$ as $r_i \to \infty$, so the left-hand side of
\eqref{eq:strict-mono} approaches zero while the right-hand side remains
positive but also approaches zero (as $\ln(1+1/r_i) \approx 1/r_i$).
Condition~\eqref{eq:strict-mono} fails whenever the approximation error
exceeds the Zipf-induced gap.  Strict ordering between tokens of similar
frequency \textbf{cannot be guaranteed} in the tail of the distribution.
This is an inherent limitation: cross-entropy training provides diminishing
approximation quality for rarer tokens.
\end{remark}

\section{Part II: Sentence-Level Extension}
\label{sec:sentence}

This part bridges the token-level results to the sentence level.

\subsection{Setup}

Let $x = (x_1, \dots, x_K)$ and $x' = (x'_1, \dots, x'_{K'})$ be two
sentences.  Their sentence-level losses (as computed by an autoregressive model)
are
\begin{align}
  \ell_\theta(x) &= \frac{1}{K}\sum_{k=1}^{K}
  \ell^{\mathrm{c}}_\theta(x_k \mid x_{<k}), \label{eq:sent-loss-x}\\
  \ell_\theta(x') &= \frac{1}{K'}\sum_{k=1}^{K'}
  \ell^{\mathrm{c}}_\theta(x'_k \mid x'_{<k}). \label{eq:sent-loss-xp}
\end{align}

Their log sentence-frequencies under Assumption~\ref{ass:sfreq} are
\begin{align*}
  \ln\sfreq(x) &= \frac{1}{K}\sum_{k=1}^{K} \ln P(x_k), &
  \ln\sfreq(x') &= \frac{1}{K'}\sum_{k=1}^{K'} \ln P(x'_k).
\end{align*}

Note that $-\ln\sfreq(x) = \frac{1}{K}\sum_{k=1}^{K}(-\ln P(x_k))$, i.e., the
negative log sentence-frequency equals the average ideal marginal NLL.

\subsection{Step 4: Decomposing Sentence-Level Loss}
\label{step:sent-bound}

For each token $x_k$ with rank $r_k$, the conditional loss can be decomposed
as follows:
\begin{align}
  \ell^{\mathrm{c}}_\theta(x_k \mid x_{<k})
  &= \underbrace{-\ln P(x_k)}_{\text{ideal marginal NLL}}
   + \underbrace{\delta_{x_k}}_{\text{marginal approx.\ error}}
   + \underbrace{\eta_{x_k}}_{\text{contextual discrepancy}}, \label{eq:token-full}
\end{align}
where:
\begin{itemize}[leftmargin=2em]
  \item $\delta_{x_k} = \ell^{\mathrm{m}}_\theta(x_k) - (-\ln P(x_k))
        = \ln P(x_k) - \ln Q_\theta(x_k)$, with $|\delta_{x_k}| \le
        \varepsilon(r_k)$ by Assumption~\ref{ass:approx};
  \item $\eta_{x_k} = \ell^{\mathrm{c}}_\theta(x_k \mid x_{<k})
        - \ell^{\mathrm{m}}_\theta(x_k)
        = \ln Q_\theta(x_k) - \ln Q_\theta(x_k \mid x_{<k})$,
        the contextual discrepancy from Assumption~\ref{ass:context}.
\end{itemize}

\textit{Verification.} Adding the three terms on the right-hand side
of~\eqref{eq:token-full}:
\begin{equation}
\begin{split}
  -\ln P(x_k) + [\ln P(x_k) - \ln Q_\theta(x_k)]
  + [\ln Q_\theta(x_k) - \ln Q_\theta(x_k \mid x_{<k})]
  \\= -\ln Q_\theta(x_k \mid x_{<k})
  = \ell^{\mathrm{c}}_\theta(x_k \mid x_{<k}). \quad\checkmark
\end{split}
\end{equation}

Averaging~\eqref{eq:token-full} over all tokens in $x$:
\begin{align}
  \ell_\theta(x)
  &= \frac{1}{K}\sum_{k=1}^{K} \ell^{\mathrm{c}}_\theta(x_k \mid x_{<k})
  \notag\\
  &= \underbrace{\frac{1}{K}\sum_{k=1}^{K}(-\ln P(x_k))}_{=-\ln\sfreq(x)}
   + \underbrace{\frac{1}{K}\sum_{k=1}^{K}\delta_{x_k}}_{\bar{\delta}_x}
   + \underbrace{\frac{1}{K}\sum_{k=1}^{K}\eta_{x_k}}_{\bar{\eta}_x}.
  \label{eq:sent-decomp}
\end{align}

Define the average marginal approximation bound:
\[
  \bar{\varepsilon}_x \triangleq \frac{1}{K}\sum_{k=1}^{K}\varepsilon(r_k).
\]
By the triangle inequality, $|\bar{\delta}_x| \le \bar{\varepsilon}_x$.
By Assumption~\ref{ass:context}, $|\bar{\eta}_x| \le \eta_x$.

Therefore:
\begin{equation}\label{eq:sent-bound}
  \boxed{
    \ell_\theta(x) = -\ln\sfreq(x) + \bar{\delta}_x + \bar{\eta}_x,
    \quad
    |\bar{\delta}_x| \le \bar{\varepsilon}_x,
    \quad
    |\bar{\eta}_x| \le \eta_x.
  }
\end{equation}

\begin{remark}[Tightness of the bound after averaging]
\label{rem:tightness}
The bound $|\bar{\delta}_x| \le \bar{\varepsilon}_x$ is worst-case
(triangle inequality).  If the token-level errors $\delta_{x_k}$ have
approximately zero mean and are weakly correlated across positions, a
central-limit-type argument gives the tighter practical estimate
$|\bar{\delta}_x| \approx O(\bar{\varepsilon}_x / \sqrt{K})$.  Similarly
for $\bar{\eta}_x$.  Thus the sufficient conditions derived below are
conservative; in practice, the effective threshold for the TFL to hold is
likely smaller by a factor on the order of $1/\sqrt{K}$.
\end{remark}

\subsection{Sentence-Level Results}

\begin{theorem}[Sentence-Level Loss--Frequency Relationship]
\label{thm:sent}
Under Assumptions~\ref{ass:zipf}, \ref{ass:approx}, \ref{ass:context},
and~\ref{ass:sfreq}, the sentence-level NLL loss satisfies
\[
  \ell_\theta(x) = -\ln\sfreq(x) + \bar{\delta}_x + \bar{\eta}_x,
\]
with $|\bar{\delta}_x + \bar{\eta}_x| \le \bar{\varepsilon}_x + \eta_x$.
That is, the sentence-level loss is approximately equal to the negative log
sentence-frequency, up to a total error bounded by
$\bar{\varepsilon}_x + \eta_x$.
\end{theorem}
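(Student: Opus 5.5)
The plan is to assemble the statement directly from the per-token decomposition~\eqref{eq:token-full} and the averaged identity~\eqref{eq:sent-decomp}. Very little new content is needed.

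The first step is the exact identity. For each position $k$ in $x=(x_1,\dots,x_K)$, I would write
\[
\ell^{\mathrm{c}}_\theta(x_k\mid x_{<k}) = -\ln P(x_k)+\delta_{x_k}+\eta_{x_k},
\]
with $\delta_{x_k}=\ln P(x_k)-\ln Q_\theta(x_k)$ and $\eta_{x_k}=\ln Q_\theta(x_k)-\ln Q_\theta(x_k\mid x_{<k})$. This is a telescoping identity and holds with no assumptions beyond all probabilities being positive, so the logarithms are finite. Averaging over $k=1,\dots,K$ and using linearity of the average gives $\ell_\theta(x)=\frac1K\sum_k(-\ln P(x_k))+\bar\delta_x+\bar\eta_x$. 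Assumption~\ref{ass:sfreq}, in the log form~\eqref{eq:sfreq-log}, then identifies the first term as $-\ln\sfreq(x)$, which gives the equality in the theorem.

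The second step is the error bound. Each $x_k$ has some rank $r_k$, and Assumption~\ref{ass:approx} gives $|\delta_{x_k}|\le\varepsilon(r_k)$; this is the content of~\eqref{eq:token-decomp}, or equivalently Theorem~\ref{thm:token}. The triangle inequality on the average then yields $|\bar\delta_x|\le\frac1K\sum_k|\delta_{x_k}|\le\bar\varepsilon_x$. For the contextual part I would invoke Assumption~\ref{ass:context} directly, which bounds the \emph{average} $|\bar\eta_x|\le\eta_x$. No per-token bound on $\eta_{x_k}$ is available, and none is needed. One more application of the triangle inequality gives $|\bar\delta_x+\bar\eta_x|\le\bar\varepsilon_x+\eta_x$.

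Assumption~\ref{ass:zipf} is not logically required for the identity itself. It only fixes the ranks $r_k$ that index $\varepsilon(\cdot)$, and it lets one optionally rewrite $-\ln\sfreq(x)=\frac{s}{K}\sum_k\ln r_k+C$ via~\eqref{eq:selfinfo-short}, which I would note as a corollary. There is no genuine obstacle here. The only point needing care is bookkeeping: the $P$ appearing in $\sfreq$ must be the same true marginal used in Assumption~\ref{ass:approx}, and the bound must be taken on the averaged $\bar\eta_x$ rather than termwise, since Assumption~\ref{ass:context} is stated only at the sentence level.
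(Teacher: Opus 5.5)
Your proposal is correct and follows the paper's proof essentially step for step. The paper also starts from the telescoping per-token identity \eqref{eq:token-full} and averages it to \eqref{eq:sent-decomp}, using Assumption~\ref{ass:sfreq} for the first term. It then bounds $\bar\delta_x$ via Assumption~\ref{ass:approx} and the triangle inequality, takes the bound on $\bar\eta_x$ directly from Assumption~\ref{ass:context}, and combines the two with one more triangle inequality. Your side remark that Assumption~\ref{ass:zipf} plays no real role in this derivation is accurate.
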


\begin{proof}
Equation~\eqref{eq:sent-decomp} gives the exact decomposition.
By the triangle inequality:
\[
  |\bar{\delta}_x + \bar{\eta}_x|
  \le |\bar{\delta}_x| + |\bar{\eta}_x|
  \le \bar{\varepsilon}_x + \eta_x. \qedhere
\]
\end{proof}

\begin{theorem}[Textual Frequency Law --- Sufficient Condition]
\label{thm:tfl}
Let $x$ and $x'$ be two paraphrases with $\sfreq(x) > \sfreq(x')$.
A sufficient condition for $\ell_\theta(x) < \ell_\theta(x')$ is
\begin{equation}\label{eq:tfl-cond}
  \ln\frac{\sfreq(x)}{\sfreq(x')}
  > (\bar{\varepsilon}_x + \eta_x) + (\bar{\varepsilon}_{x'} + \eta_{x'}),
\end{equation}
where $\bar{\varepsilon}_x, \eta_x$ and $\bar{\varepsilon}_{x'}, \eta_{x'}$
are the approximation and contextual error bounds for $x$ and $x'$,
respectively.
\end{theorem}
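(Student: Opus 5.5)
The plan is to derive the statement directly from Theorem~\ref{thm:sent}, applied separately to each of the two paraphrases, and then compare worst-case bounds, mirroring the argument used for Theorem~\ref{thm:token-mono} at the token level. Applying the boxed decomposition~\eqref{eq:sent-bound} to $x$ and to $x'$ gives
\[
  \ell_\theta(x) = -\ln\sfreq(x) + \bar{\delta}_x + \bar{\eta}_x,
  \qquad
  \ell_\theta(x') = -\ln\sfreq(x') + \bar{\delta}_{x'} + \bar{\eta}_{x'},
\]
with $|\bar{\delta}_x + \bar{\eta}_x| \le \bar{\varepsilon}_x + \eta_x$ and likewise for $x'$. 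Note that no structural relation between $x$ and $x'$ is needed: the lengths $K$ and $K'$ may differ, and the paraphrase hypothesis is only interpretive. Each sentence's bound is self-contained because $\bar{\varepsilon}$ and $\eta$ are defined per sentence.

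Next I will bound $\ell_\theta(x)$ from above and $\ell_\theta(x')$ from below:
\[
  \ell_\theta(x) \le -\ln\sfreq(x) + \bar{\varepsilon}_x + \eta_x,
  \qquad
  \ell_\theta(x') \ge -\ln\sfreq(x') - \bar{\varepsilon}_{x'} - \eta_{x'}.
\]
It therefore suffices that the first upper bound be strictly smaller than the second lower bound. Rearranging, this is
\[
  \ln\sfreq(x) - \ln\sfreq(x') > (\bar{\varepsilon}_x + \eta_x) + (\bar{\varepsilon}_{x'} + \eta_{x'}),
\]
which is exactly~\eqref{eq:tfl-cond}, since the left side equals $\ln(\sfreq(x)/\sfreq(x'))$. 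Chaining the inequalities then gives $\ell_\theta(x) < \ell_\theta(x')$.

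There is no real obstacle; the only care needed is in sign handling when turning the two-sided bound of Theorem~\ref{thm:sent} into one-sided bounds. Specifically, the upper bound uses $+(\bar{\varepsilon}_x+\eta_x)$ for $x$, while the lower bound uses $-(\bar{\varepsilon}_{x'}+\eta_{x'})$ for $x'$. The strictness of the final inequality is inherited from the strict hypothesis~\eqref{eq:tfl-cond}. The assumption $\sfreq(x) > \sfreq(x')$ is implied by~\eqref{eq:tfl-cond}, since its right-hand side is nonnegative, so it plays no further role. Finally, I will note that in the uniform-error case this reduces to a sentence analogue of~\eqref{eq:rank-ratio}.
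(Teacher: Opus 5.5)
Your proposal is correct and follows the paper's proof essentially verbatim. You apply Theorem~\ref{thm:sent} to each sentence, take the worst-case upper bound $-\ln\sfreq(x)+(\bar{\varepsilon}_x+\eta_x)$ on $\ell_\theta(x)$ and the worst-case lower bound $-\ln\sfreq(x')-(\bar{\varepsilon}_{x'}+\eta_{x'})$ on $\ell_\theta(x')$, and rearrange the requirement that the former be strictly below the latter into~\eqref{eq:tfl-cond}. Your side remarks are also right: $K \neq K'$ causes no difficulty, and the hypothesis $\sfreq(x)>\sfreq(x')$ is implied by the condition because its right-hand side is nonnegative.
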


\begin{proof}
By Theorem~\ref{thm:sent}, the worst-case upper bound on $\ell_\theta(x)$
and worst-case lower bound on $\ell_\theta(x')$ are:
\begin{align*}
  \ell_\theta(x) &\le -\ln\sfreq(x) + (\bar{\varepsilon}_x + \eta_x), \\
  \ell_\theta(x') &\ge -\ln\sfreq(x') - (\bar{\varepsilon}_{x'} + \eta_{x'}).
\end{align*}
It suffices to require the upper bound on $\ell_\theta(x)$ to be strictly less
than the lower bound on $\ell_\theta(x')$:
\[
  -\ln\sfreq(x) + (\bar{\varepsilon}_x + \eta_x)
  < -\ln\sfreq(x') - (\bar{\varepsilon}_{x'} + \eta_{x'}).
\]
Rearranging (add $\ln\sfreq(x)$ and $(\bar{\varepsilon}_{x'} + \eta_{x'})$
to both sides):
\[
  (\bar{\varepsilon}_x + \eta_x) + (\bar{\varepsilon}_{x'} + \eta_{x'})
  < \ln\sfreq(x) - \ln\sfreq(x')
  = \ln\frac{\sfreq(x)}{\sfreq(x')},
\]
which is precisely condition~\eqref{eq:tfl-cond}.
\end{proof}

\begin{remark}[Sufficient, not necessary]
\label{rem:sufficient}
Condition~\eqref{eq:tfl-cond} is a sufficient condition.  The TFL may hold
even when this condition is not met, because:
\begin{enumerate}[label=(\roman*)]
  \item The worst-case bounds are conservative --- actual errors may partially
        cancel rather than compound.
  \item The averaging effect across $K$ tokens (Remark~\ref{rem:tightness})
        typically yields a much tighter effective error, on the order of
        $(\bar{\varepsilon}_x + \eta_x)/\sqrt{K}$.
  \item For high-frequency paraphrases, the contextual discrepancy
        $\bar{\eta}_x$ tends to be negative (Remark~\ref{rem:eta}), which
        further reduces the actual sentence loss below the worst-case bound.
\end{enumerate}
\end{remark}

\begin{remark}[Practical magnitude of the condition]
\label{rem:magnitude}
The condition requires the log frequency ratio of the two paraphrases to
exceed the sum of all error bounds.  In practice, paraphrases constructed
by substituting a few content words (e.g., ``deserted'' $\to$ ``abandoned'')
while sharing most function words (``the'', ``was'', ``in'') differ modestly
in sentence frequency.  Whether~\eqref{eq:tfl-cond} is satisfied depends on:
\begin{itemize}[leftmargin=2em]
  \item How many tokens differ, and how large the frequency gap is for those
        tokens.
  \item The model's approximation quality ($\varepsilon(r)$) at the relevant
        frequency tiers.
  \item The magnitude of the marginal--conditional discrepancy ($\eta$).
\end{itemize}
The theorem provides the analytical framework; the empirical validation in the
main paper demonstrates that the TFL holds in practice across a wide range of
settings, suggesting that the error terms are typically small enough for the
condition to be effectively met.
\end{remark}

\section{Discussion: From Loss Ordering to Task Performance}
\label{sec:discussion}

Theorems~\ref{thm:sent} and~\ref{thm:tfl} establish that, under the stated
assumptions, higher-frequency paraphrases incur lower NLL loss.  The empirical
claim of the Textual Frequency Law is stronger: higher-frequency paraphrases
lead to better \emph{task performance} (e.g., higher accuracy in math reasoning,
higher BLEU/chrF in machine translation).  Bridging this gap requires
additional reasoning that we outline here.

\paragraph{For prompting.}
When an LLM is prompted with input $x$, the model generates output
$y = (y_1, \dots, y_T)$ by sampling from or maximising the conditional
distribution $Q_\theta(y \mid x)$.  Lower NLL loss on $x$ means the model
assigns higher probability to the token sequence $x$.  This implies that $x$
falls in a region of the input space where the model's internal representations
are better calibrated --- having been shaped by more training examples with
similar token distributions.  An input that the model ``understands'' better
(assigns higher probability to) is more likely to activate the correct
reasoning pathways and produce accurate outputs.  This argument is plausible
and consistent with the empirical evidence, but it is not a formal proof:
the relationship between input perplexity and output quality depends on the
model's internal mechanism, which is not captured by our framework.

\paragraph{For fine-tuning.}
In fine-tuning, the model optimises $\sum_n \log Q_\theta(y_n \mid x_n)$ over
training pairs $(x_n, y_n)$.  If the model already assigns higher probability
to the input tokens of high-frequency paraphrases, the gradient signal from
these examples is more stable and the effective learning rate for the output
mapping is higher.  Additionally, high-frequency inputs are closer to the
pre-training distribution, reducing the risk of catastrophic forgetting.

\paragraph{Status of this argument.}
The connection from loss ordering to task performance is an \textbf{empirically
motivated hypothesis}, not a theorem.  The formal contribution of this proof is
the loss ordering result (Theorem~\ref{thm:tfl}).  The task performance
connection is supported by extensive experiments in the main paper.

\section{Summary of Results}
\label{sec:summary}

\begin{center}
\renewcommand{\arraystretch}{1.5}
\begin{tabular}{@{}lll@{}}
\toprule
\textbf{Result} & \textbf{Equation} & \textbf{Assumptions Used} \\
\midrule
Token semi-log linearity (Thm.~\ref{thm:token})
  & \eqref{eq:semi-log}
  & \ref{ass:zipf}, \ref{ass:approx} \\
Token strict monotonicity (Thm.~\ref{thm:token-mono})
  & \eqref{eq:strict-mono}
  & \ref{ass:zipf}, \ref{ass:approx} \\
Sentence loss--frequency (Thm.~\ref{thm:sent})
  & \eqref{eq:sent-bound}
  & \ref{ass:zipf}, \ref{ass:approx}, \ref{ass:context}, \ref{ass:sfreq} \\
TFL sufficient condition (Thm.~\ref{thm:tfl})
  & \eqref{eq:tfl-cond}
  & \ref{ass:zipf}, \ref{ass:approx}, \ref{ass:context}, \ref{ass:sfreq} \\
\bottomrule
\end{tabular}
\end{center}

\section{Limitations}
\label{sec:limitations}

We catalogue the limitations of the theoretical framework for full transparency.

\begin{enumerate}[leftmargin=2em, label=\arabic*.]

\item \textbf{Assumption~\ref{ass:approx} is not derivable from the training
objective.}
The pointwise log-domain approximation guarantee is stronger than what
cross-entropy minimisation alone can ensure.  Cross-entropy training controls
the $P$-weighted expected loss, not the per-token log-domain error.  The
assumption is empirically motivated but remains a hypothesis about the outcome
of training.  For low-frequency tokens, $\varepsilon(r)$ may be large, and the
theorem's guarantees weaken accordingly.  While several studies provide
indirect support for the plausibility of this assumption (see
Remark~\ref{rem:approx-strength}), a direct empirical measurement of the
pointwise bound $\varepsilon(r)$ as a function of rank remains an open problem
in the literature.

\item \textbf{Contextual discrepancy $\eta_x$ is difficult to estimate.}
The magnitude of $\eta_{x_k}$ depends on the specific sentence context and the
model's learned conditional distributions.  No general data-independent bound is
available.  In the proof, $\eta_x$ is treated as an axiomatically bounded
quantity.  Empirically, one could estimate $\eta_x$ by comparing marginal and
conditional perplexities on a held-out corpus, but such estimates would be
model- and data-specific.

\item \textbf{The sentence frequency measure is a unigram approximation.}
The geometric-mean definition (Assumption~\ref{ass:sfreq}) ignores word order
and inter-token dependencies.  For paraphrase pairs that differ mainly in word
choice (not syntactic structure), this is a reasonable proxy.  For paraphrases
with substantially different syntactic structures or lengths, the measure may
not fully capture the relevant notion of ``commonness.''

\item \textbf{Sentence length differences.}
When two paraphrases have different lengths $K \neq K'$, the averaging effect
differs: a longer sentence averages over more tokens, which may tighten or
loosen the effective error bounds.  This interaction is not explicitly modelled;
the theorem treats $\bar{\varepsilon}_x$ and $\eta_x$ as given quantities.

\item \textbf{Loss ordering does not formally imply task performance ordering.}
The proven result is $\ell_\theta(x) < \ell_\theta(x')$ (lower NLL loss for
higher-frequency paraphrases).  The claim that this translates to better
downstream task performance (higher accuracy, higher BLEU) is empirically
supported but not formally established within this framework.
See Section~\ref{sec:discussion} for further discussion.

\item \textbf{Semantic equivalence is assumed, not verified.}
The TFL compares paraphrases with ``the same meaning.''  The proof assumes
perfect semantic equivalence; in practice, paraphrasing inevitably introduces
subtle meaning shifts.  A formal treatment would require a semantic similarity
metric, which is beyond the scope of a frequency-based theorem.

\item \textbf{Zipf's law is approximate in the tail.}
The power-law model fits well for the bulk of the vocabulary but may deviate
for extremely rare tokens.  Such deviations are absorbed into $\varepsilon(r)$
in the analysis, but this means the error bound for tail tokens reflects both
the model's approximation error and the inadequacy of the Zipf model itself.

\end{enumerate}

\section{Conclusion}
\label{sec:conclusion}

This document has established, under clearly stated assumptions, that:
\begin{enumerate}[label=(\roman*)]
  \item Token-level NLL loss is semi-log linear in frequency rank
        (Theorem~\ref{thm:token}).
  \item Sentence-level NLL loss is approximately equal to the negative log
        sentence-frequency, with a bounded error term
        (Theorem~\ref{thm:sent}).
  \item When the sentence-frequency ratio between two paraphrases is
        sufficiently large relative to the error bounds, the higher-frequency
        paraphrase provably has lower model loss (Theorem~\ref{thm:tfl}).
\end{enumerate}
These results provide the theoretical foundation for the Textual Frequency Law.
The sufficient condition is conservative; empirical evidence in the main paper
demonstrates that the TFL holds broadly in practice, consistent with the
error terms being small enough for the condition to be effectively satisfied in
typical settings.

\end{document}